\documentclass{article}


\usepackage[nonatbib,preprint]{neurips_2024}




\usepackage[utf8]{inputenc} 
\usepackage[T1]{fontenc}    
\usepackage{hyperref}       
\usepackage{url}            
\usepackage{booktabs}       
\usepackage{amsfonts}       
\usepackage{nicefrac}       
\usepackage{microtype}      
\usepackage{xcolor}         
\usepackage{graphicx}
\usepackage{biblatex}
\usepackage{amsmath}
\usepackage{algpseudocode}  
\usepackage{algorithm}
\usepackage{xcolor}
\usepackage{amsthm}
\usepackage[normalem]{ulem}
\usepackage{placeins}

\definecolor{goodbluebar}{RGB}{114, 147, 203}
\definecolor{goodorangebar}{RGB}{225, 151, 76}
\definecolor{goodgreenbar}{RGB}{132, 186, 91}
\definecolor{goodredbar}{RGB}{211, 94, 96}
\definecolor{goodblackbar}{RGB}{128, 133, 133}
\definecolor{goodpurplebar}{RGB}{144, 103, 167}
\definecolor{goodwinebar}{RGB}{171, 104, 87}
\definecolor{goodgoldbar}{RGB}{204, 194, 16}

\definecolor{goodblue}{RGB}{57, 106, 177}
\definecolor{goodorange}{RGB}{218, 124, 48}
\definecolor{goodgreen}{RGB}{62, 150, 81}
\definecolor{goodred}{RGB}{204, 37, 41}
\definecolor{goodblack}{RGB}{83, 81, 84}
\definecolor{goodpurple}{RGB}{107, 76, 154}
\definecolor{goodwine}{RGB}{146, 36, 40}
\definecolor{goodgold}{RGB}{148, 139, 61}

\definecolor{keywordcolor}{RGB}{0,0,255}
\definecolor{commentcolor}{RGB}{0,128,0}
\definecolor{stringcolor}{RGB}{255,0,0}

\bibliography{references}

\title{10 Years of Fair Representations: Challenges and Opportunities}

\author{
  Mattia Cerrato~\thanks{These authors contributed equally.}~~\thanks{Institute of Computer Science Johannes Gutenberg-Universität Mainz, Germany} \\
  \texttt{mcerrato[@]uni-mainz.de} \\
  \And
  Marius Köppel~\footnotemark[1]~~\thanks{Institute for Particle Physics and Astrophysics, ETH Zurich, Switzerland} \\
  \texttt{mkoepp[@]phys.ethz.ch} \\
  \And
  Philipp Wolf~\footnotemark[1]~~\footnotemark[2] \\
  \texttt{pwolf01[@]students.uni-mainz.de} \\
  \And
  Stefan Kramer~\footnotemark[2] \\
  \texttt{kramer[@]informatik.uni-mainz.de} \\
}

\begin{document}

\maketitle

\begin{abstract}
Fair Representation Learning (FRL) is a broad set of techniques, mostly based on neural networks, that seeks to learn new representations of data in which sensitive or undesired information has been removed.
Methodologically, FRL was pioneered by Richard Zemel et al. about ten years ago. The basic concepts, objectives and evaluation strategies for FRL methodologies remain unchanged to this day. 
In this paper, we look back at the first ten years of FRL by i) revisiting its theoretical standing in light of recent work in deep learning theory that shows the hardness of removing information in neural network representations and ii) presenting the results of a massive experimentation (225.000 model fits and 110.000 AutoML fits) we conducted with the objective of improving on the common evaluation scenario for FRL.
More specifically, we use automated machine learning (AutoML) to adversarially "mine" sensitive information from supposedly fair representations.
Our theoretical and experimental analysis suggests that deterministic, unquantized FRL methodologies have serious issues in removing sensitive information, which is especially troubling as they might seem "fair" at first glance.
\end{abstract}

\section{Introduction}

Biased machine learning systems have been shown to have detrimental impacts on society, perpetuating social inequalities and reinforcing harmful stereotypes.
For instance, in Amazon's attempt to automate its hiring process, the company's computer programs, developed since 2014, reportedly aimed to streamline talent acquisition by analyzing resumes.
However, the system was reported to display gender bias, penalizing resumes containing terms like ``women's,'' disadvantaging female applicants for technical roles~\cite{amazon}.
Similarly, in the United States, algorithms like COMPAS have been used in nine states to assess a criminal defendant's risk of recidivism.
An analysis of COMPAS revealed discriminatory outcomes: black defendants who did not recidivate were more frequently misclassified as high risk compared to their white counterparts, while white re-offenders were often mislabeled as low risk~\cite{machine_bias}.

Both examples show the concern that such models trained on biased data might then learn those biases~\cite{barocas-hardt-narayanan}, therefore perpetuating historical discrimination against certain groups of individuals.
Machine learning methodologies designed to avoid these situations are often said to be ``group-fair'' in the sense that they seek to distribute resources equally across groups.
This paper focuses on a specific kind of algorithm -- Fair Representation Learning (FRL) -- which is part of this domain.

FRL is a broad set of techniques that seek to remove undesired information from data.
FRL is based mostly but not exclusively on neural network techniques.
Our focus in this paper is however the theoretical and experimental evaluation of neural network-based FRL.
The goal of such techniques is to learn the parameters $\theta$ for a projection $f_\theta: X \rightarrow Z$ from the feature space $X$ to a latent feature space $Z$.
The task was pioneered by Zemel et al., about ten years ago~\cite{zemel2013learning}.

The two competing goals for $Z$ are to remove all information about a sensitive attribute $S$ while retaining as much information as possible about some task for which labeled data $Y$ is available.
An alternative formulation is based on the auto-encoding concept: information about $X$ should still be present as much as possible in $Z$.
While it is of course possible to simply remove $S$ from the dataset columns, this does not generally prevent statistical inference on $S$.
Discarding sensitive data is usually termed ``fairness by unawareness'' and does not in general grant group-anonymity (we refer the interested reader to the book by Barocas et al. \cite{barocas-hardt-narayanan}, Chapter 3, Figure 3.4). 
A simple way to understand this phenomenon is to reason about the correlation between ZIP code (sometimes deemed non-sensitive information) and ethnicity (usually deemed sensitive) in the US and other countries.
FRL improves on fairness by unawareness by actively seeking to ``stamp out'' and remove any correlation between the learned representation $Z$ and the sensitive information $S$.
It has been observed in practice in the last 10 years that information removal contributes to other fairness metrics such as independence/disparate impact or separation/disparate mistreatment \cite{xie2017controllable,madras,cerrato2020constraining,Louizos2016TheVF,he2016deep}.

One advantage of these methodologies is that any classifier can be trained on the learned fair representation~\cite{zemel2013learning}, while other methods may rely on a specific model or technique.
Due to this, FRL enables a ``separation of concerns'' scenario~\cite{mcnamara2017provably}. 
Here, a data user is assumed to be interested in developing an ML-based automated or semi-automated decision-making system for which fairness concerns are relevant. A trusted data regulator, who is also allowed access to sensitive information, will then employ an FRL algorithm and share the obtained fair representations privately with the data user.
This setup gives the opportunity for increased trust into the overall ML-based decision-making system, as the regulator would able to evaluate the amount of correlation between $Z$ and $S$ while the user will not have access to $S$ or any of its correlations.
It is relatively common for work in FRL to perform the above investigation by training some number of classifiers on $Z$ and observing whether their performance is close enough to random guessing for the dataset at hand.
If it is, then this provides some empirical evidence that an FRL method is working as intended.

All these advantages notwithstanding, it is not straightforward to conclude that FRL should be the go-to methodology for fairness-sensitive applications.
One significant limitation here is that neural network-based FRL is not transparent and quite hard to interpret \cite{interFair}.
When fairness is relevant, the application is by default high-stakes \cite{rudin2019stop}: it is then hard to justify employing neural networks, esp. when the data is tabular and other methodologies are therefore better than, or at least competitive with, FRL \cite{grinsztajn2022tree}.
The one advantage that remains unique to FRL is therefore, in our view, the aforementioned separation of concerns scenario.

In this paper, we revisit the first 10 years of fair representation learning and discuss its unique limitations and opportunities for real-world impact. We start by summarizing the most visible contributions in this area and how they relate to one another. Then, we discuss the general theoretical setup of FRL and discuss its limitations by relating them to theoretical advancements in understanding the information dynamics of deep neural networks \cite{goldfeld2020information}. We then move on to showing the result of a massive experimentation -- a total of around 225.000 model fits and 110.000 AutoML fits -- we ran across 6 datasets. We release \texttt{EvalFRL}, the experimental library we developed for severe testing of FRL methodologies, which can found at \url{https://anonymous.4open.science/r/EvalFRL/}.

\section{Related Work}
Algorithmic fairness has garnered considerable interest from both academia and the general public in recent years, largely due to the ProPublica/COMPAS controversy \cite{machine_bias,Rudin2020Age}. However, the earliest contribution in this field appears to date back to 1996, when Friedman and Nisselbaum \cite{friedman1996bias} highlighted the necessity for automatic decision systems to be aware of systemic discrimination and ethical considerations. The importance of addressing automatic discrimination is also reflected in EU legislation, particularly in the GDPR, Recital 71 \cite{malgieri2020gdpr}.
One approach to addressing these issues involves eliminating the influence of the "nuisance factor" $S$ from the data $X$ through fair representation learning. This method involves learning a projection of $X$ into a latent feature space $Z$ where all information about $S$ has been removed. 
A pioneering contribution to this area is by Zemel et al. \cite{zemel2013learning}. Since then, neural networks have been widely employed in this context. Some approaches \cite{xie2017controllable, madras} use adversarial learning, a technique introduced by Ganin et al. \cite{ganin2016jmlr}, which involves two networks working against each other to predict $Y$ while removing information about $S$. 
Another line of research \cite{Louizos2016TheVF, moyer2018invariant} uses variational inference to approximate the intractable distribution $p(Z \mid X)$. 
This involves combining architectural design \cite{Louizos2016TheVF} and information-theoretic loss functions \cite{moyer2018invariant, gretton2012kernel} to promote the invariance of neural representations with respect to $S$.
Recently, neural architectures have been proposed for other fairness-related tasks such as fair ranking \cite{zehlike2018reducing,cerrato2020pairwise,fair_pair_metric} and fair recourse \cite{shubham2021fair}.

Another line of investigation that focuses on the information theory of DNNs and provides context for this work is the information bottleneck (IB) problem \cite{tishby2000information} and its applications to the understanding of deep neural networks (DNNs) training dynamics. 
Originally, Swartz-Ziv and Tishby \cite{shwartzziv2017opening} put forward the idea of computing the mutual information term $I(X;Z)$ via quantization and observed that deeper networks undergo a faster compression phase -- a reduction of $I(X;Z)$ that happened earlier in the training process.
These results inspired a reproducibility study by Saxe et al. \cite{saxe2019information},  who observed information compression in networks that employ certain non-linearities (tanh, sigmoids), but no compression when other activations were considered (ReLU).
With regard to the original investigation \cite{shwartzziv2017opening}, Polyanskiy and Goldfeld \cite{goldfeld2020information} retorted that computing $I(X;Z)$ via quantization introduces quantization artifacts and that compression of $I(X;Z)$ is theoretically impossible in deterministic DNNs with injective or bi-Lipschitz activation functions. 
Another limitation was described by Amjad and Geiger \cite{amjad2019learning}, who contributed an analysis of the IB framework under discrete datasets, concluding that the IB functional (its optimization objective) is piecewise constant and is therefore hard to optimize with gradient descent and its variants.
To the best of our knowledge, these fundamental results in the information theory of deep learning have not been analyzed in the context of FRL.

\section{Challenges in Fair Representation Learning}\label{sec:challenge}
Let us denote the dataset of individuals as a matrix $X \in \mathcal{X}^{n \times d}$, where each individual $i \in {1 \dots n}$ is described by a feature vector $x_i$ with $d$ dimensions.
The sensitive attribute is denoted with the random variable $S \in \mathcal{S}$, and the corresponding labels are denoted as $Y \in \mathcal{Y}$.
In fair representation learning, the goal is to learn a representation $Z \in \mathcal{Z}^{n \times m}$ of the data such that it preserves relevant information for the task at hand while removing information about $S$. Usually, but not necessarily, $m < d$. With a slight abuse of notation, we will discuss $X, Y, Z$ as random variables with their sample spaces being $\mathcal{X}, \mathcal{Y}, \mathcal{Z}$, respectively.
Concretely, we define $\phi^i(x) = \sigma(A^i \phi^{i-1}(x) + b^i)$ as the $i$-th layer function of a deep neural network with $L$ layers, where $A$ is a matrix of real-valued weights and $b$ is a bias vector. We assume that $\sigma$ is applied to each dimension of its argument without any aggregation, as is common in DNNs. We note that $\phi^0(x) = x$ and $\phi^L(x) = \hat{y} \in \hat{Y}$, the prediction or reproduction of $Y$. Thus, we name $Z^i = \phi^i(x)$ as the random variable representing the representation extracted from the data by the $i$-th layer of the network.

It follows that if for some $i < L$ it is true that $Z^i \bot S$, then the output $\hat{Y}$ of the network will also be independent of $S$ \cite{madras}, leading for instance to the ``independence'' definition of group fairness in classification \cite{barocas-hardt-narayanan}.
To achieve this goal, most fair representation learning approaches employ a loss function with two terms: a classification loss to ensure predictive performance on the task of interest, and a fairness loss to encourage fairness in the learned representation.
Therefore, the overall objective function for fair representation learning can be formulated as a combination of the classification loss and the fairness loss.
This can be achieved using a weighted sum of the two losses, where the relative importance of each component is controlled by the hyperparameter $\gamma$:

\begin{equation}
    \min_{\theta} \; (1 - 
    \gamma) \mathcal{L}_{\text{class}}(\theta) + \gamma \mathcal{L}_{\text{fair}}(\theta) \label{eq:tradeoff}
\end{equation}

where $\theta$ represents the parameters of the model, $\mathcal{L}_{\text{class}}$ is the classification loss, $\mathcal{L}_{\text{fair}}$ is the fairness loss.
%
As discussed by various authors \cite{achille2018emergence,cerrato2023invariant}, it is possible to formulate this task in an information-theoretic manner by relying on mutual information.
A theoretical formulation of fair representation learning using mutual information between $Z$ and $S$ can be defined by starting from the mutual information between representation and sensitive data:
\begin{equation}
    \mathcal{L}_{fair}(\theta) = I(Z; S) = \int_{s \in S} \int_{z \in Z} P(z, s) \log \frac{P(z, s)}{P(z)P(s)}
\end{equation}
where $P(z, s)$ is the joint probability density and $P(z)$, $P(s)$ are the marginal probability distributions of $Z$ and $S$, respectively. Usually, in FRL $S$ is taken to be discrete, representing some quantized sensitive characteristic that may lead to unacceptable harm, discrimination, or both. 
To achieve a fair or invariant representation, this term needs to be minimized.
Ideally, at the same time the representation $Z$ would be informative for the prediction task, i.e., it would retain sufficient information about the labels $Y$.
\begin{align}
    \min_{Z} \; &I(Z; S).  \label{eq:ib} \\ 
    s.t. \; &I(Z; Y) \geq \alpha \nonumber
\end{align}
The trade-off between preserving task-relevant information and minimizing the mutual information with the sensitive attribute is the key challenge in fair representation learning.

FRL techniques are commonly evaluated over two different frames:
\begin{itemize}
    \item \textbf{Fair allocation.} Suppose that certain values of $\hat{Y}$ lead to desirable outcomes for the individuals represented in $X$. In classification, this may be easily understood as $\hat{Y} = 1$ representing, for instance, being selected for a job interview by a CV-scanning application. Then, a FRL technique succeeds if obtains a fair allocation of desirable outcomes by removing information about $S$ in $Z$ and then using $Z$ in a further classification stage of the network. The fairness of the allocation is then computed via any application-relevant metric, e.g. discrimination \cite{zemel2013learning,xie2017controllable,Louizos2016TheVF,cerrato2020constraining}, disparate mistreatment \cite{zafar2017fairness}, etc..
    \item \textbf{Invariant representations.} Suppose that the representation $Z$ is computed by some trusted party that is allowed access to both $X$ and $S$. Then, a FRL technique succeeds if it may be employed by this trusted party to obtain $Z $ such that $Z \bot S$. In practice, this may be evaluated by training a supervised classifier on $Z$ and computing its accuracy in predicting $S$. Invariant representations may then be safely distributed to data users which may use them to train any ML methodology which will be by construction unaware about $S$ and any of its correlations to $X$.
\end{itemize}

Fair allocation is a high-stakes task for which, however, interpretability may be required, on ethical~\cite{rudin2019stop} or even legal~\cite{malgieri2020gdpr} grounds.
Interpretable FRL is an active area of research \cite{jovanovic2023fare,corrvector} and it is in general not straightforward to interpret the meaning of $Z$.
Currently, it may be preferable to employ better-understood methodologies, such as fair reductions~\cite{zafar2019constraints}. If it is acceptable to use $S$ at test time, post-processing techniques are provably optimal~\cite{hardt2016equality}.
Thus, learning invariant representations would be the main -- and nominal -- selling point of FRL.
We report in the following some relatively well-known results in the information theory of deep learning whose consequences for FRL, to the best of our knowledge, have not been previously discussed.
\subsection*{Information and Mutual Information in Neural Networks}
The optimization problem in Eq. \ref{eq:ib} bears a close resemblance to the information bottleneck (IB) problem introduced in \cite{tishby2000information} and then famously applied to neural network training dynamics \cite{tishby2015deep}. 
The authors propose to understand learning representations as the problem of compressing $X$ into $Z$ while losing minimal information about $Y$. 
The only significant difference with the information-theoretic formulation of FRL is then that $X$ is substituted by $S$.
Then, we prove in the following that previous work on mutual information in deep neural networks \cite{goldfeld2020information,czyz2024beyond} also applies to FRL:
\newtheorem{mi}{Theorem}
\begin{mi}\label{th:theorem1}
Let X, Y, and S be the random variables representing data, labels and the sensitive attributes, respectively. Let $\phi^i(x) = \sigma(A^i \phi^{i-1}(x) + b^i)$ be the $i$-th layer function of a DNN, where $A$ is a weight matrix, $b$ a bias vector, and $\sigma$ an injective non-linearity. Let $Z^i = \phi^i(x)$ be a random variable. Let thus $S \to Y \to X \to \dots \to Z^k \to \dots \hat{Y}$ be a Markov chain, where $\hat{Y}$ is an estimation of $Y$. Then, $I(X;S) = I(Z^i;S) \; \forall i \in \{1 \dots L\}$,  where $L$ is the number of layers in the network. 
\end{mi}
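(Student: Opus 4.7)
The plan is to sandwich $I(Z^i;S)$ between $I(X;S)$ from above and below. The upper bound $I(Z^i;S) \le I(X;S)$ is immediate from the data processing inequality (DPI) applied to the sub-Markov-chain $S \to X \to Z^i$, which is a consequence of the stated Markov chain $S \to Y \to X \to \dots \to Z^i$. The nontrivial direction is the reverse inequality, which I would obtain by exhibiting a second Markov chain $S \to Z^i \to X$, i.e., by showing $X$ is almost-surely recoverable from $Z^i$.

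To establish this reverse Markov chain I would prove that the map $X \mapsto Z^i$ is injective. Each layer $\phi^j(z) = \sigma(A^j z + b^j)$ decomposes as an affine map followed by the componentwise non-linearity $\sigma$. Since $\sigma$ is injective by hypothesis and componentwise application of an injective scalar map preserves injectivity, it suffices that the affine part be injective; this is implicit in the assertion that the layer map is injective, and holds generically when $A^j$ has full column rank. By induction on the number of layers, the composition $X \mapsto Z^i = (\phi^i \circ \dots \circ \phi^1)(X)$ is then an injective measurable map between standard Borel spaces.

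By a standard descriptive-set-theoretic argument (e.g., the Lusin--Kuratowski theorem), an injective Borel map between standard Borel spaces admits a measurable left inverse $g$ on its image, so $X = g(Z^i)$ almost surely. This yields the desired reverse Markov chain $S \to Z^i \to X$ and, via a second application of DPI, the bound $I(X;S) \le I(Z^i;S)$. Combining the two bounds gives equality for every $i$, completing the proof.

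The hard part is the measure-theoretic justification that a deterministic injective forward pass preserves the \emph{entire} informational content of $X$ relevant to $S$; intuitively nothing is lost because the map is one-to-one, but the formal argument requires either invoking descriptive set theory as above or an explicit construction of the inverse along the lines of the related result of Polyanskiy and Goldfeld for $I(X;Z)$. The consequence for FRL is stark: whenever all layers use injective activations (tanh, sigmoid, leaky ReLU, etc.) and weight matrices of full column rank, the fairness objective $I(Z^i;S)$ is pinned at $I(X;S)$ at every layer, so no gradient-based optimization of $\mathcal{L}_{\text{fair}}$ can actually reduce it.
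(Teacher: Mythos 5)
Your proof is correct, but it takes a genuinely different route from the paper's. The paper argues via entropy identities: it asserts $H(X) = H(Z^i)$ because the network is a deterministic one-to-one map, writes $I(S;X) = H(X) - H(X\mid S)$ and $I(S;Z^i) = H(Z^i) - H(Z^i\mid S)$, and then reduces the claim to $H(X\mid S) = H(Z^i\mid S)$, which it obtains from equality of the joint entropies $H(X,S) = H(Z^i,S)$ together with the chain rule $H(\cdot\mid S) = H(\cdot,S) - H(S)$. You instead sandwich $I(Z^i;S)$ by two applications of the data processing inequality: the forward chain $S \to X \to Z^i$ gives the upper bound, and the measurable left inverse of the injective forward map (via the Lusin--Souslin/Kuratowski injection theorem) gives the reverse chain $S \to Z^i \to X$ and the lower bound. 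Your route buys two things: it sidesteps entropy altogether, which matters because for continuous $X$ the paper's step $H(X) = H(Z^i)$ is only literally valid for discrete variables (differential entropy is not invariant under bijections; the Jacobian corrections happen to cancel, but the paper does not address this), whereas mutual information itself is invariant and the DPI argument works uniformly in the discrete, continuous, and mixed cases; and it makes explicit the hidden hypothesis that each affine map $A^i(\cdot) + b^i$ must be injective (full column rank) for the layer to be one-to-one -- an assumption the theorem statement and the paper's proof both use silently. The paper's argument is shorter and more elementary, leaning on the cited invariance-of-entropy results; yours is more self-contained and more careful about the measure-theoretic and rank conditions under which the impossibility result actually holds.
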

\begin{proof}
We note that each $Z^i = \phi^i(x)$ is a deterministic, one-to-one mapping of the previous layer's input, or of the input itself. Thus, $H(X) = H(Z^i) \; \forall i \in \{1 \dots M\}$ \cite{goldfeld2020information,czyz2024beyond}.
Then, we rewrite the mutual information terms as follows:
\begin{align*}
I(S;X) &= H(X) - H(X \mid S) \\
I(S;Z^i) &= H(Z^i) - H(Z^i \mid S)
\end{align*}
since $H(X) = H(Z^i)$, it follows that the theorem is true if 
\begin{align}
H(X \mid S) = H(Z^i \mid S) \label{eq:entropies}
\end{align}
We first note that the joint entropies $H(Z^i, S)$ and $H(X, S)$ are equal as $Z^i$ is computed via a one-to-one mapping of $X$ \cite{PolyanskiyWu2023}. Then, by the chain rule of entropy, we have $H(Z^i|S) = H(Z^i,S) - H(S)$ and $H(X|S) = H(X,S) - H(S)$. Substituting these equalities in Eq.~\ref{eq:entropies} concludes the proof.
\end{proof}
This result derives straightforwardly from the fact that deterministic neural networks with injective activation functions are one-to-one mappings of the input data.
Thus, two different $x_1, x_2 \in \mathcal{X}^{n \times d}$ will always be mapped onto two different representations $z^i_1, z^i_2 \in \mathcal{Z}_i^{n \times m}$, even if $m < d$.
It follows that sensitive information is not removed in general when descending the layers of a neural network.
Therefore, Theorem \ref{th:theorem1} is an impossibility theorem for FRL on infinite-precision deterministic networks with tanh or sigmoid activations, and may be extended to bi-Lipschitz functions \cite{goldfeld2020information} such as Leaky-ReLU.
It is important to note that non-injective activations such as ReLU escape the theorem; as pointed out by Amjad and Geiger \cite{amjad2019learning}, however, another practical limitation applies.
Specifically, $I(Z;X)$ will only take finitely many values provided that the data features $X$ is discrete.
In FRL, $S$ is usually assumed to be discrete and thus $I(Z;S)$ is piecewise constant, which makes for a difficult objective to optimize for. 
Another caveat is provided by the fact that invariance in the mutual information does not imply invariance in its estimation.
Supervised classifiers trained on $(Z, S)$ may as well display a lesser degree of accuracy compared to ones trained on $(X, S)$ -- as data samples with different values of $s \in \mathcal{S}$ get mapped closer together, it may be harder in practice to distinguish between them.
It is also critical to note here as well that $Z^i$ does not have infinite precision on a discrete computer~\cite{saxe2019information}.
The sigmoid function only saturates to 1 as $x \to \infty$; however, a computer will return $1$ as the result of $\frac{1}{e^{-x}+1}$ much sooner than that, depending on how many bits are used to represent the result of the computation.
Using low-bit representations, or ``hard'' clusterings, is therefore another way to escape the limitations put forward by Theorem \ref{th:theorem1} \cite{cerrato2023invariant,jovanovic2023fare}.
Lastly, we note that Theorem \ref{th:theorem1} does not apply to neural network models that incorporate stochasticity in their process, as $H(X)$ is in general not equal to $H(Z^i)$ in that situation.

\section{Experiments}

In this section, we report on an in-depth experimentation that we conducted with the aim of evaluating the significance of the impossibility theorem stated in the previous section.
We are especially interested in testing whether it is still possible to predict the sensitive attribute $S$ with a high degree of accuracy from the fair representations learned by deterministic FRL methodologies, as our theoretical result suggests that it should be so.
To this end, we evaluated a total of 8 FRL methodologies: BinaryMI and DetBinaryMI, \cite{cerrato2023invariant}, DebiasClassifier \cite{ganin2016jmlr,xie2017controllable,madras}, NVP \cite{fairNVP}, VFAE \cite{Louizos2016TheVF}, ICVAE \cite{moyer2018invariant}, LFR \cite{zemel2013learning} and Deep Domain Confusion \cite{tzeng2014deep}.
While some of these are fully determinisic, others incorporate stochasticity by sampling from a parametric distribution which parameters are learned in the network.
Our expectation is that stochastic models will show greater invariance in their learned representations, as Theorem \ref{th:theorem1} does not apply and sensitive information may in principle be removed.
We tested these models on six different commonly employed fairness-sensitive datasets, focusing on the task of fair classification and independence as a fairness definition (i.e. $\hat{Y} \bot S$). 
We give in-depth descriptions of models and datasets in the Appendix, Sections~\ref{sec:app-models} and \ref{sec:app-datasets}, respectively.

We release \texttt{EvalFRL}\footnote{\url{https://anonymous.4open.science/r/EvalFRL/}}, the experimental framework we employed to perform our experimentation alongside all experimental metadata, including best hyperparameters for all models and performance at the outer fold level\footnote{\url{https://drive.google.com/drive/folders/1koZd8cgBJMVGuH3uRqvpTFEUJo0Sd23q?usp=sharing}} \footnote{We discuss the used compute infrastructure to run the experiments in the Appendix \ref{sec:compute}.}. Our framework was developed to perform in-depth evaluation of FRL methodologies across the two main frames discussed in Section \ref{sec:challenge} -- fair allocation and invariant representations. Thus, we evaluate both the fairness of the allocations learned by FRL methods and approximate the mutual information between the representations $Z$ and the sensitive attribute $S$ with the performance of external estimators.
%
%
\subsection{\texttt{EvalFRL}: An Evaluation Library for Fair Representation Learning}
As mentioned above, if an estimator can predict $S$ better than random guessing, this indicates that information on $S$ is still contained in the fair representation.
To investigate whether this may happen in commonly employed FRL techniques, we developed the \texttt{EvalFRL} framework, wherein every tested dataset-model combination follows a standardized testing pipeline.
This process is fully reproducible, thereby ensuring comparability between the models.
%
In previous work tackling FRL, the experimental setup has focused on training a few classifiers on $(Z^i, S)$ \cite{cerrato2020constraining,zemel2013learning,Louizos2016TheVF,xie2017controllable}. 
However, the number of classifiers may vary and the optimal hyperparameters are not always reported, leading to results which are hard to compare across different papers.
We employ automated machine learning (AutoML) to handle this problem.
AutoML automatically searches for the optimal machine learning solution for a given problem.
This includes, among others, feature preprocessing, model selection and hyperparameter tuning. 

\begin{algorithm}
    \caption{Overview of \texttt{EvalFRL} logic when ran for one dataset-model combination.}
    \label{alg:framework}
    \begin{algorithmic}
        \State $\textcolor{goodblue}{r} \gets 15$, $\textcolor{goodblue}{k} \gets 3$, $\textcolor{goodblue}{seed} \gets 123$, $\textcolor{goodblue}{results} \gets []$, $\textcolor{goodblue}{repr_{\text{train}}} \gets []$, $\textcolor{goodblue}{repr_{\text{test}}} \gets []$
        \For{$i$ \textcolor{goodblue}{in} $1$ \textcolor{goodblue}{to} $r$}
            \State $X_{\text{train}}, X_{\text{test}}, y_{\text{train}}, y_{\text{test}}, s_{\text{train}}, s_{\text{test}} \gets \text{\textsc{train\_test\_split}}(X, y, s, \frac{2}{3}, \text{random\_state} = seed)$
            \State $cv \gets \text{\textsc{RandomizedSearchCV}}(model, param\_distributions, cv=k, n\_iter=100)$
            \State $\text{\textsc{bestmodel}} \gets cv.\text{\textsc{fit}}(X_{\text{train}}, y_{\text{train}}, s_{\text{train}}, \gamma=0)$
            \For{$\gamma_0$ \textcolor{goodblue}{in} $\{0, 0.1, \dots 1\}$}
                \State $\text{\textsc{bestmodel.fit}}(X_{\text{train}}, y_{\text{train}}, s_{\text{train}}, \gamma=\gamma_0)$ \Comment{\textcolor{goodgreen}{bestmodel is fit from scratch}}
                \State $results.\text{\textsc{append}}(\text{\textsc{evaluation}}(\text{\textsc{bestmodel}}, X_{\text{test}}, y_{\text{test}}, s_{\text{test}}))$
                \State $repr_{\text{train}}.\text{\textsc{append}}(\text{\textsc{get\_representations}}(\text{\textsc{bestmodel}}, X_{\text{train}}))$
                \State $repr_{\text{test}}.\text{\textsc{append}}(\text{\textsc{get\_representations}}(\text{\textsc{bestmodel}}, X_{\text{test}}))$
            \EndFor
            \For{$repr_{train_i}$, $repr_{test_i}$ \textcolor{goodblue}{in} ($repr_{\text{train}}, repr_{\text{test}}$)}
                \State $AutoML \gets \text{\textsc{automl}}()$ \Comment{\textcolor{goodgreen}{AutoML is trained from scratch at each CV iteration}}
                \State $AutoML.\text{\textsc{fit}}(repr_{train_i}, s_{\text{train}})$
                \State $results.\text{\textsc{append}}(\text{\textsc{evaluation}}(AutoML, repr_{test_i}, s_{\text{test}}))$
            \EndFor
        \EndFor
    \end{algorithmic}
\end{algorithm}

We show in Algorithm \ref{alg:framework} the main use case of \texttt{EvalFRL}.
A detailed description of the steps and a graphical representation of the framework's logic is shown in the Appendix \ref{app:exp}.
The whole pipeline is built using the Kedro framework \cite{Alam_Kedro_2024} and can be easily extended to include other models, datasets and metrics beyond the ones we consider in this work.
The data preprocessing step starts with the segmentation of the data into features $X$, the label $y$ and the sensitive attribute $S$.
Additionally, $y$ gets transformed to either a positive ($y=1$) or a negative ($y=0$) outcome and $S$ to either the privileged ($S=1$) or underprivileged ($S=0$) group.
Categorical features undergo encoding, while continuous features are normalized with mean 0 and variance 1.
In the subsequent step, hyperparameter-tuning is performed utilizing $r$-times $k$-fold cross-validation \cite{bouckaertfrank2004}.
In our experiments we set $r=15$ and $k=3$ by following the recommendations in Naudeau and Bengio \cite{nadeau2003inference}.
The best hyperparameters found in every outer-loop, along with a range of $\gamma$ values regulating the fairness/accuracy tradeoff (Equation \ref{eq:tradeoff}), are then utilized to evaluate the model.
The evaluation step contains the model's predictive performance on the label $y$ (Acc, AUC), as well as multiple fair allocation metrics.
Finally, we seek to evaluate $Z$ and $S$, for each fair representation generated in the previous step.
We utilize an AutoML library called MLJAR \cite{mljar} for this process.
AutoML searches the best model and trains it using the representations on the train-data and the corresponding sensitive information $S$.
Its performance gets tested using the representations on the held out test data and $S$.

The output of the framework allows for an exploration of the trade-off between predicting the label $y$ and the fairness of the model according to known fairness-metrics.
Additionally, it provides researchers with a common, high-effort evaluation framework for FRL methodologies in the invariant representation framing. 
If it is possible to have a higher performance than guessing $S$ randomly, this leads to the conclusion that there is still information on $S$ contained in the representation.
These investigations are done over a range of $\gamma$ tradeoff values, which makes it possible to understand the impact of the $\gamma$ parameter on both representation invariance and fairness of allocations.
\subsection{Fairness Metrics}

\paragraph*{Area under Discrimination Curve (AUDC)}
We quantify disparate impact through discrimination, following the approach introduced by Zemel et al. \cite{zemel2013learning}.
The discrimination metric, denoted as $\text{yDiscrim}$, is defined as:

\begin{equation}
    \text{yDiscrim} = \left | \frac{\sum^n_{n:s_n=1} \hat{y}_n}{\sum^n_{n:s_n=1} 1} - \frac{\sum^n_{n:s_n=0} \hat{y}_n}{\sum^n_{n:s_n=0} 1} \right |,
\end{equation}
where $n:s_n=1$ indicates that the $n$-th example has a value of $s$ equal to 1.
To generalize this metric akin to how accuracy generalizes to obtain a classifier's area under the curve (AUC), we evaluate the above measure for different classification thresholds and then compute the area under this curve.
In our experiments, we utilized $100$ equispaced thresholds.
We call this measure AUDC, following conventions established in the literature \cite{interFair}.
Unlike AUC, lower values are indicative of better performance.
\paragraph*{rND}
To measure fairness in learning to rank applications, we use the rND metric~\cite{ke2017measuring}.
This metric evaluates differences in exposure across multiple groups and is defined as:
\begin{equation}
\text{rND} = \frac{1}{Z} \sum_{i \in \{10, 20, ...\}}^N \frac{1}{\log_{2}(i)} \left| \frac{ \mid S^{+}_{1...i} \mid}{i} - \frac{\mid S^+ \mid}{N} \right |.
\end{equation}
rND measures the difference between the ratio of the protected group in the top-$i$ documents and in the overall population.
The maximum value, $Z$, serves as a normalization factor and is computed with a dummy list where all members of the underprivileged group are placed at the end, representing ``maximal discrimination.''
The metric also penalizes over-representation of protected individuals at the top compared to their overall population ratio.
\subsection{Results in Fair Allocation}\label{sec:results-fair-alloc}
\begin{figure}[h]
    \centering
    \includegraphics[width=\linewidth]{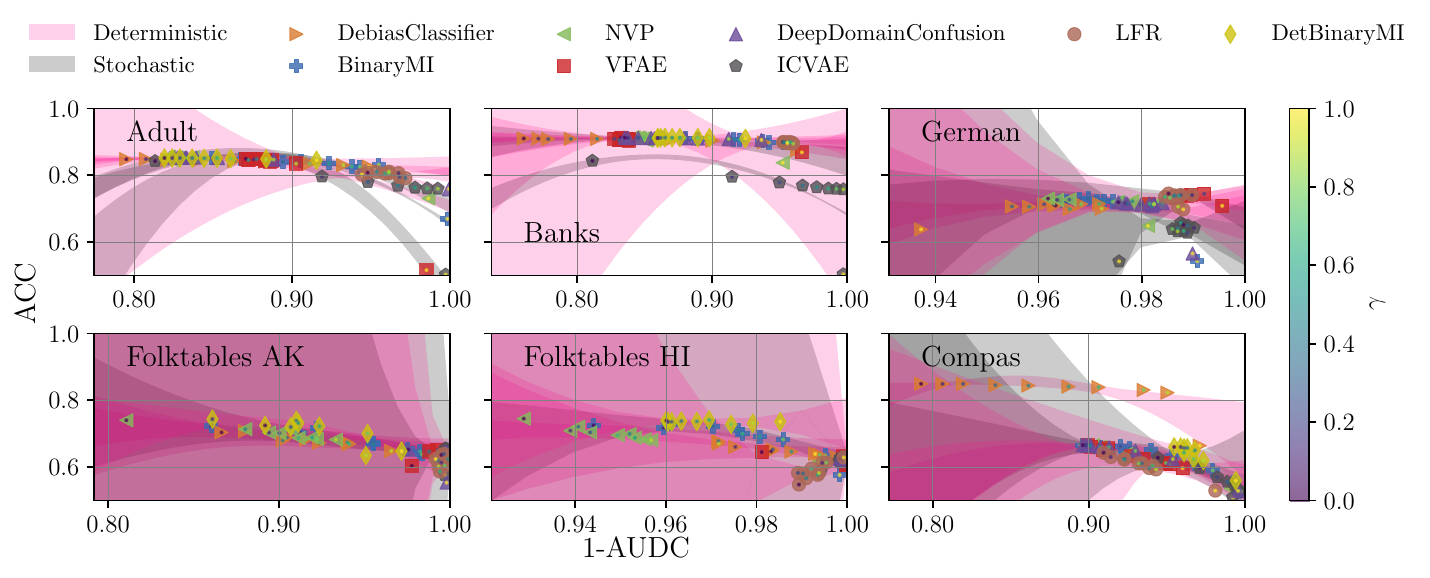}
    \caption{Accuracy vs. 1 - AUC-Discrimination tradeoff for all six dataset and eight model combinations. Each model is displayed for different $\gamma$ values indicated via a colored point inside the model marker.}
    \label{fig:model-acc-auc-dis}
\end{figure}
The model accuracy vs. fairness tradeoff results are shown in Figure \ref{fig:model-acc-auc-dis} (ACC vs. 1-AUDC) and \ref{fig:model-auc-auc-dis} (AUC vs. 1-AUDC), as well as in Figure \ref{fig:model-acc-dis} (ACC vs. 1-Discrimination), \ref{fig:model-auc-dis} (AUC vs. 1-Discrimination), \ref{fig:model-acc-parity} (ACC vs. Statistical Parity Difference), \ref{fig:model-auc-parity} (AUC vs. Statistical Parity Difference), \ref{fig:model-acc-delta} (ACC vs. Delta), \ref{fig:model-auc-delta} (AUC vs. Delta), \ref{fig:model-acc-rnd} (ACC vs. 1-rND), \ref{fig:model-auc-rnd} (AUC vs. 1-rND) of the appendix. 
Each figure shows all combinations of the six datasets and eight models.
The colored points in the symbols of the models show the used $\gamma$ value, while the colored areas show the variance employing 100 Gaussian bootstrapping fits using the mean and variance of the model performance optained from the 15 hold-out splits.
For most datasets and metric combinations (e.g. Adult, Banks and German) one can observe that the performance of the the majority of the models is equal.
Most models show a well defined tradeoff behaviour when changing the $\gamma$ value.

In Figure \ref{fig:model-acc-auc-dis} we observe that the performance for the DebiasClassifier on the Compas dataset outperforms all other models.
However, it takes mostly fair allocation decisions at higher values of gamma.
We conclude that the performance of FRL methodologies in the task of fair allocation is approximately equal. Varying the tradeoff parameter $\gamma$, as expected, leads to fairer decisions.

\begin{figure}[h]
    \centering
    \includegraphics[width=\linewidth]{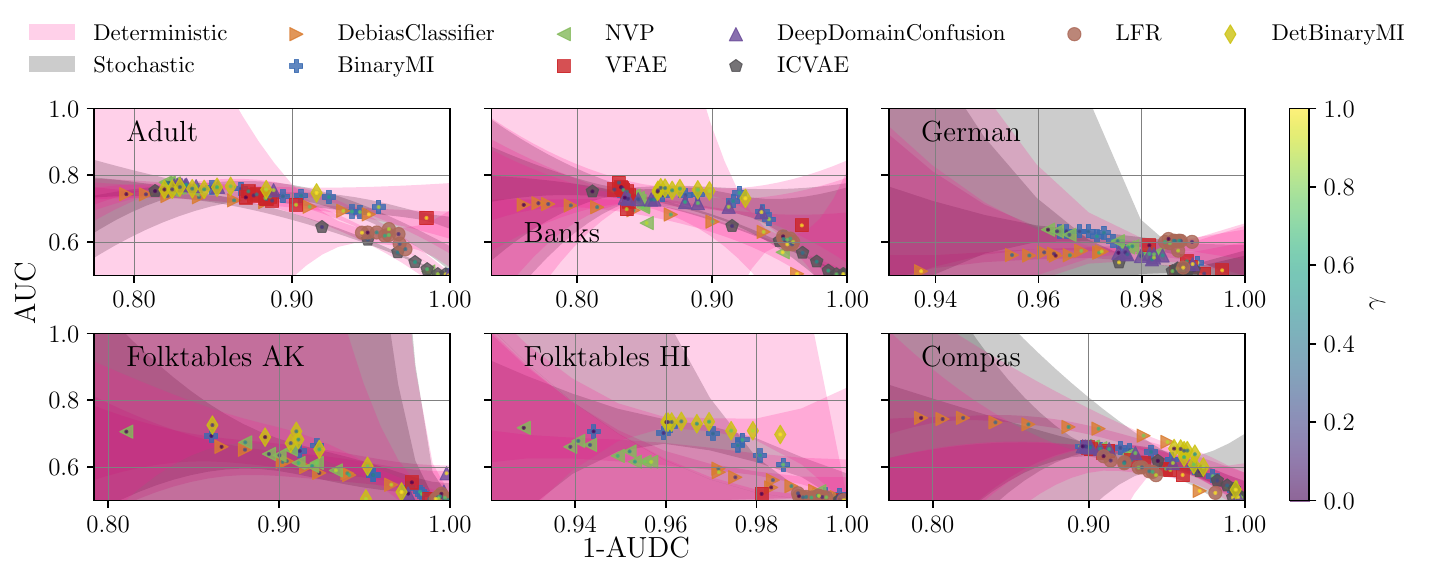}
    \caption{AUC vs. 1 - AUC-Discrimination tradeoff for all six dataset and eight model combinations. Each model is displayed for different $\gamma$ values indicated via a colored point inside the model marker.}
    \label{fig:model-auc-auc-dis}
\end{figure}

\subsection{Results in Invariant Representations}\label{sec:invariant}
\begin{figure}[h]
    \centering
    \includegraphics[width=\linewidth]{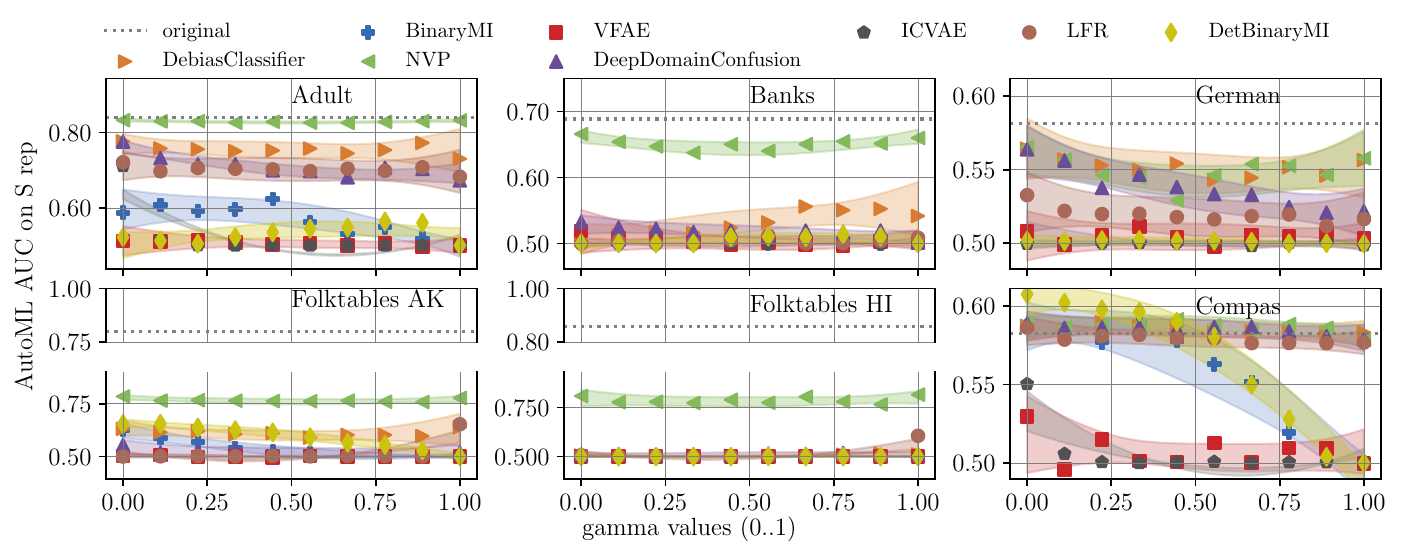}
    \caption{AutoML AUC vs. gamma results for all six dataset and eight model combinations.}
    \label{fig:automl-acc}
\end{figure}
We now take the same models reported in the previous subsection and investigate whether AutoML is able to recover information about the sensitive attributes from their representations.
Here, we expect that the accuracy of AutoML will approach the proportion of the majority group in the dataset (Figure \ref{fig:automl-acc}), and that the AUC will approach 0.5 (Figure~\ref{fig:automl-auc}), as the tradeoff parameter $\gamma$ increases. 
The results are particularly striking: while this is indeed what happens for stochastic or quantized models (BinaryMI, DetBinaryMI, VFAE, ICVAE) at higher $\gamma$ values, deterministic models have serious issues removing information (DebiasClassifier, NVP, DeepDomainConfusion, LFR) as the performance of AutoML remains well above random guess at many or all settings of $\gamma$.
These results experimentally confirm the theoretical impossibility theorem of Section \ref{sec:challenge}, and are of particular concern as they regard models that take overall fair allocation decisions. 
For instance, the DebiasClassifier is able to learn fair allocations on COMPAS; however, the representations still contain information about $S$ and should therefore not be considered safe for distribution to data users interested in employing them in other ML tasks.
We give a comparison of a ReLU-activated DebiasClassifier in the Appendix, Section~\ref{sec:app-relu}, where we observe similarly that information is not consistently removed.
A similar trend is clearly visible for the NVP model across all datasets.

We conclude by offering an explanation for the phenomenon of fair allocation models not learning invariant representations. 
We note that the observation that when $S$ and $Y$ are correlated, a weak classification model will also be relatively fair in terms of allocation.
Thus, FRL methodologies which are not severely tested for representation invariance may still obtain fair allocation decisions via a weak classification stage $Z^i \to \hat{Y}$.
\begin{figure}[h]
    \centering
    \includegraphics[width=\linewidth]{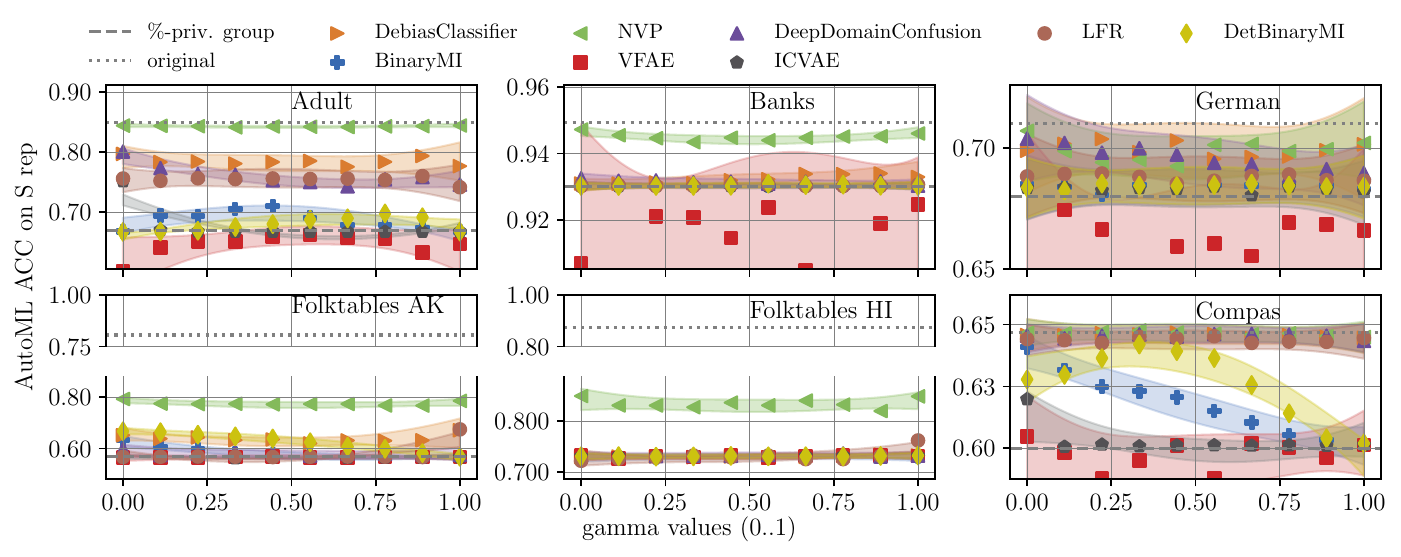}
    \caption{AutoML ACC vs. gamma results for all six dataset and eight model combinations.}
    \label{fig:automl-auc}
\end{figure}

\section{Fair Representation Learning: The Next 10 Years}
In this paper we have discussed a fundamental theoretical challenge to fair representation learning and experimentally analyzed its relevance to several methodologies proposed in the first ten years of this field.
To ensure that FRL develops into an influential methodology and achieves real-world impact, we put forward the following recommendations for further research. 
\begin{itemize}
    \item \textbf{Clarify information reduction strategy.} As Theorem \ref{th:theorem1} shows, many common assumptions in deep neural network learning (deterministic representations, injective activation functions) lead to serious theoretical FRL challenges. Future FRL research proposing new methodologies should discuss these fundamental information-theoretic results and clarify how the mutual information $I(T^i;S)$ may be actually reduced. Models that are currently understood to be information-reducing include stochastic \cite{Louizos2016TheVF} or highly quantized \cite{balunovic2021fair,cerrato2023invariant} representations.
    \item \textbf{Severe testing across both FRL frames.} As highlighted in Section \ref{sec:invariant}, it may happen that FRL methods will display fairness in terms of allocation but will not be in terms of learning invariant representations. Both evaluation frames are critical, especially since FRL methods are overall opaque and other simpler methods are provably optimal in term of fair allocation \cite{hardt2016equality}. To facilitate future severe testing in FRL we release \texttt{EvalFRL}, our extensible experimentation library \url{https://anonymous.4open.science/r/EvalFRL}.
    \item \textbf{Testing on datasets with known distributions.} One way to obtain rigorous baselines for information removal is to obtain and test on datasets for which the distributions are known a priori. To avoid testing on simplified toy datasets, recent developments in data generators for biased data \cite{biasondemand} should be considered. We elaborate on other sources for complex real-world data with known distributions and show initial results in Appendix \ref{sec:physics-data}.
\end{itemize}

\printbibliography

\newpage

\appendix

\section{Detailed Experimental Setup}\label{app:exp}

\begin{figure}[h]
    \centering
    \includegraphics[width=\linewidth]{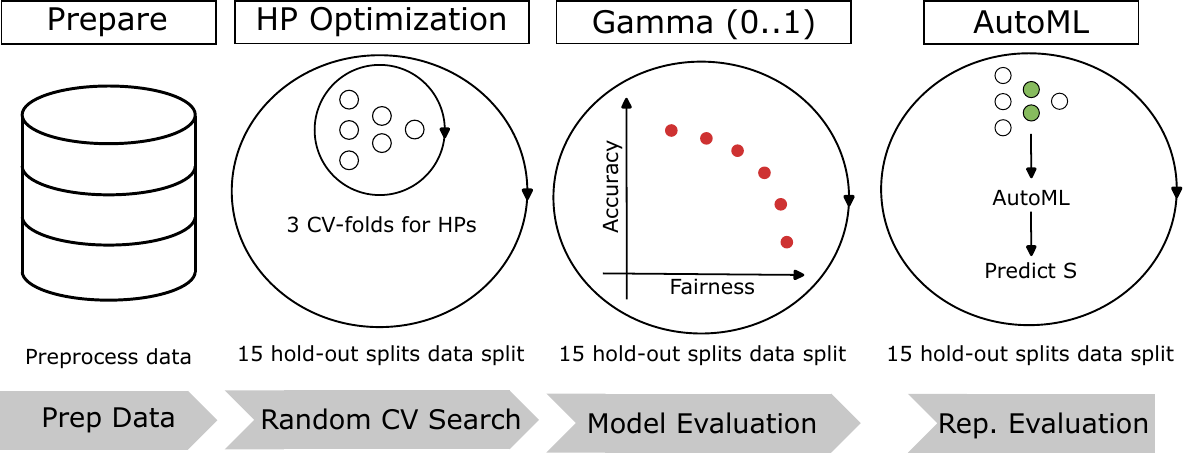}
    \caption{A graphical summary of \texttt{EvalFRL}, our experimentation library for FRL algorithms. It shows an overview of the Kedro pipeline used for preprocessing, hyperparameter optimization, $\gamma$ experiments and AutoML evaluation.}
    \label{fig:framework}
\end{figure}
We show a graphical summary of \texttt{EvalFRL} in Figure \ref{fig:framework}.
Our experimentation library employs separate Kedro pipelines to perform preprocessing, hyperparameter optimization, tradeoff analysis and the final evaluation for each model/dataset combination.
In detail, \texttt{EvalFRL} performs the following steps for every available dataset and model:
\begin{enumerate}
    \item Data preprocessing via encoding (discrete features) and normalization (continuous features).
    \item Employing a 15-by-3 hold-out/CV split \cite{nadeau2003inference}, tune the hyperparameters via random search so to maximise the AUC w.r.t. the classification task for each dataset. We test 100 different hyperparameter combinations.
    \item Utilizing the found hyperparameter combinations, models are then re-trained with gamma values ranging from 0 to 1. The trained models are used generate fair representations $Z$ by computing the activations of the second-to-last layer $Z^{L-1} = \phi^{L-1}(x)$.
    \item Use AutoML to predict the sensitive attribute $S$ from the representations $Z^{L-1}$.
\end{enumerate}
This procedure repeated across 8 models and 6 datasets implies a total of 335.000 model fits.
The tested hyperparameter ranges for each methodology and dataset are available at \url{https://anonymous.4open.science/r/EvalFRL/runs/experiment.yml}. The best hyperparameters found for each outer fold are available in the experimental metadata \url{https://drive.google.com/drive/folders/1koZd8cgBJMVGuH3uRqvpTFEUJo0Sd23q?usp=sharing}. We refer to the \texttt{README.md} file contained therein for instructions.

\subsection{Models}\label{sec:app-models}
\textbf{BinaryMI}
The BinaryMI model leverages stochastically activated binary layer(s) to compute the mutual information between these layers and the sensitive attributes.
By treating neurons as bernoulli random variables, this approach directly calculates the mutual information, which is then used as a regularization factor during gradient descent to ensure fairness in the learned representations \cite{cerrato2023invariant}.
n our experiments, we also utilize this model to determine whether the fairness of the representations is due to the Bernoulli layer or the quantization process.
Both factors, as discussed in Section \ref{sec:challenge}, may be employed to circumvent Theorem \ref{th:theorem1}.
To achieve this, we remove the Bernoulli sampling from the training phase and use a quantized sigmoid activation instead.
We refer to this deterministic version of the BinaryMI model as DetBinaryMI.

\textbf{DebiasClassifier}
The DebiasClassifier leverages adversarial training to create fair representations by integrating an adversarial network that discourages the encoding of protected attributes.
We implemented this method via gradient reversal as proposed by Ganin et al. \cite{ganin2016jmlr} in domain adaptation and then employed in fairness by Xie et al. \cite{xie2017controllable} and Madras et al. \cite{DBLP:journals/corr/abs-1802-06309}.

\textbf{NVP}
Out of several available FRL normalizing flow methodologies, we test a fair normalizing flow model \cite{corrvector} that leverages two RealNVP \cite{dinh2017realnvp} models.
We choose this method as it does not require the sensitive attribute at test time (differently to e.g. \cite{balunovic2021fair}) and as it does seek to break the bijective relationship inherent to normalizing flows (e.g. present in AlignFlow \cite{alignflow}) by ``funnelling'' information about sensitive attribute into one latent variable, which is then set to zero when entering the second RealNVP.

\textbf{VFAE}
The Fair Variational Autoencoder (VFAE) is a methodology proposed by Louizos et al. \cite{Louizos2016TheVF} that leverages variational autoencoders to learn fair representations. 
The fairness of the representations is obtained via architectural constraints and a loss term based on the Maximum Mean Discrepancy (MMD) \cite{gretton2012kernel}.

\textbf{ICVAE}
The Independent Conditional Variational Autoencoder (ICVAE) is also based on variational autoencoders \cite{moyer2018invariant}. 
Here, fairness is obtained via the well-known relationship between mutual information and the KL divergence.
A probability density for $P(Z)$ is made available by employing variational autoencoders. 

\textbf{LFR} Learning Fair Representations (LFR) was proposed by Zemel et al. \cite{zemel2013learning} and poineered the field of FRL. In LFR every individual gets stochastically mapped to so-called prototypes, which are points in the same space as $X$. This mapping $g: X \to Z$ combined with another mapping $f: Z \to Y$ are optimized to satisfy three goals: 1. $g$ statisfies group fairness, 2. $g$ retains all information on $X$ and 3. $f \circ g$ is close to the real classification.
While the formulation in the original paper \cite{zemel2013learning} appears to us to be discrete and stochastic, we note that its implementation in the \texttt{AIF360} library consistently returns continuous representations without any variance across different calls of the \texttt{transform(X)} function. 
We employed the \texttt{AIF360} implementation in our experimentation.

\textbf{DeepDomainConfusion}
\cite{tzeng2014deep} was introduced by Tzeng et al. as a domain adaptation method. Similarly to VFAE \cite{Louizos2016TheVF}, it employs the MMD \cite{gretton2012kernel} between representations of different domains as a loss function term. We instead encode domains as different values of sensitive attributes.

\subsection{Dataset Information}\label{sec:app-datasets}
\textbf{COMPAS.} This dataset (called Compas in the following), introduced by ProPublica \cite{machine_bias}, focuses on evaluating the risk of future crimes among individuals previously arrested, a system commonly used by US judges.
The ground truth is whether an individual commits a crime within the following two years.
The sensitive attribute is ethnicity.

\textbf{Adult.} This dataset, available in the UCI repository \cite{dua2019uci}, pertains to determining whether an individual's annual salary exceeds \$50,000.
We take gender to be the sensitive attribute \cite{Louizos2016TheVF,zemel2013learning}.

\textbf{Bank marketing.} Here, the classification task involves predicting whether an individual will subscribe to a term deposit.
This dataset (called Banks in the following) exhibits disparate impact and disparate mistreatment concerning age, particularly for individuals under 25 and over 65 years old.
\cite{banks}

\textbf{German.} The German Credit dataset, contains credit data of individuals with the objective of predicting their credit risk as either high or low risk. The gender of the individuals serves as the sensitive attribute \cite{misc_statlog_german_credit_data_144}.

\textbf{Folktables.} The folktables datasets are a collection of datasets derived from US cencus data, which span multiple years and all states of the USA \cite{ding2021retiring}.
Although the dataset supports multiple prediction tasks, we only used the income task, in which the objective is to predict whether an individual´s income exceeds 50.000\$. The sensitive attribute is the race of the individual. 
We picked the datasets from Alaska (AK) and Hawaii (HI) for our experiments, by comparing the performance of AutoML and logistic regression in predicting the sensitive attribute $S$ using the features $X$. 
We observed that on AK and HI AutoML performed remarkably better than linear models, indicating a complex relationship between the features $X$ and the sensitive attribute $S$.
Thus, we concluded that learning invariant representations on these datasets would be a relatively hard task.

\subsection{Other Fairness Metrics}
Before introducing further results in fair allocation, we report here other two classical fair allocation metrics commonly employed in the literature.

\paragraph*{Statistical Parity Difference}
Statistical Parity Difference (SPD) measures the difference in the probability of favorable outcomes between protected and unprotected groups.
It is defined as:

\begin{equation}
\text{SPD} = P(\hat{Y} = 1 \mid S = 0) - P(\hat{Y} = 1 \mid S = 1)
\end{equation}
where $\hat{Y}$ is the predicted outcome, and $S$ is the sensitive attribute (e.g., gender, race).
A value of 0 indicates perfect fairness, while values closer to -1 or 1 indicate higher disparity.

\paragraph*{Delta} Introduced by Zemel et al. \cite{zemel2013learning}, Delta is defined as $\text{Delta} = \text{yDiscrim} - \text{yAcc}$, with $\text{yAcc}$ being the prediction accuracy

\begin{equation}
    \text{yAcc} = 1 - \frac{1}{N}\sum_{n=1}^N |y_n - \hat{y}_n|.
\end{equation}

\cite{zemel2013learning}
This metric indicates the relative gain in terms of fairness vs. accuracy. 

\section{Other Results in Fair Allocation}

The following plots demonstrate how different models perform in terms of accuracy and fairness across a range of gamma values.
Deterministic models (pink shading) generally show higher accuracy and less variability compared to stochastic models (gray shading).
Notably, datasets like Banks and German exhibit more significant changes, highlighting the importance of gamma tuning.

\begin{figure}[!h]
    \centering
    \includegraphics[width=\linewidth]{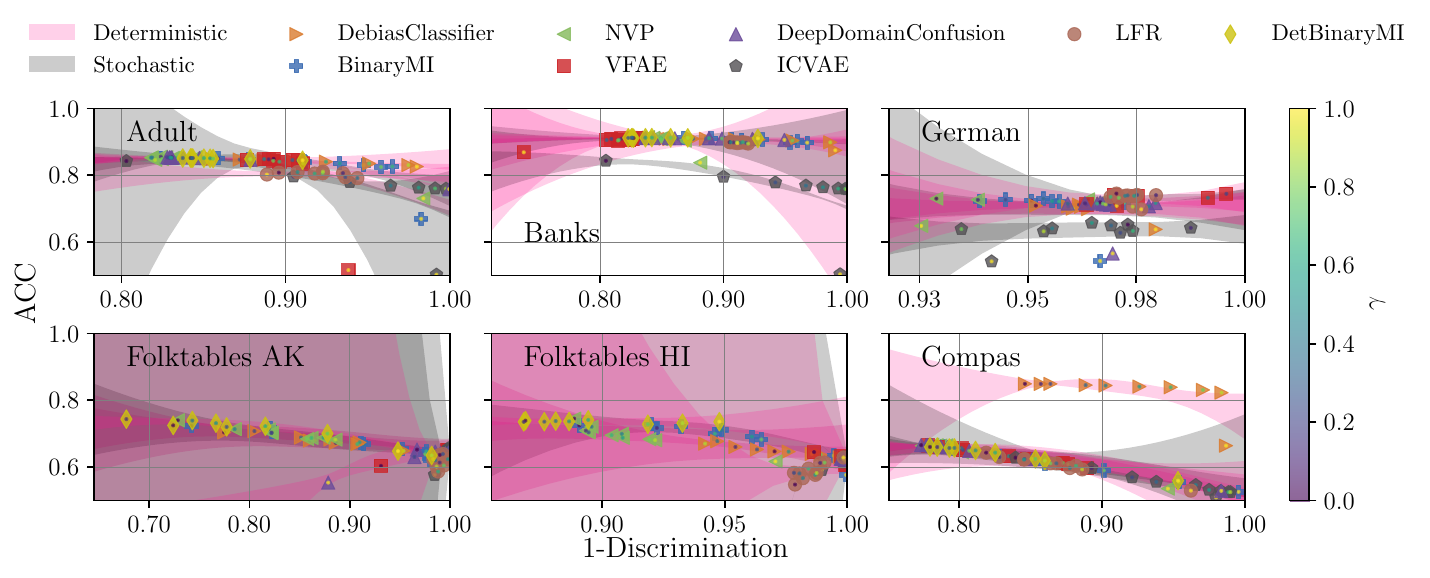}
    \caption{Accuracy vs. 1 - Discrimination tradeoff for all six dataset and eight model combinations.}
    \label{fig:model-acc-dis}
\end{figure}

\begin{figure}[!h]
    \centering
    \includegraphics[width=\linewidth]{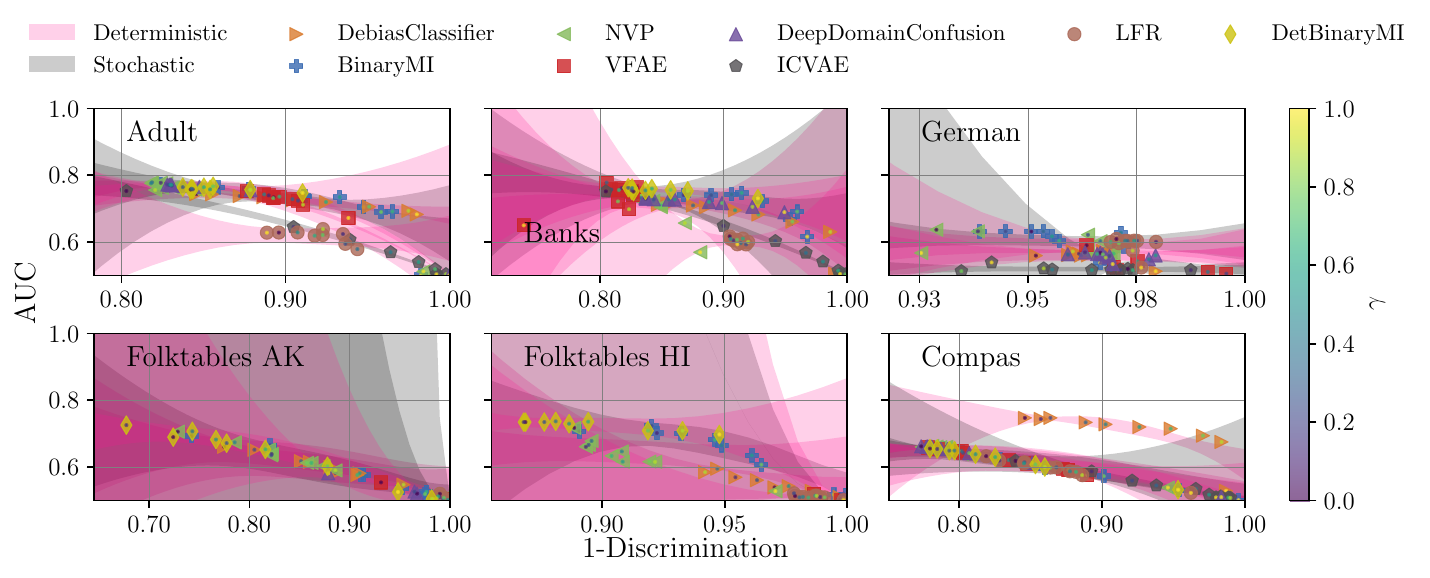}
    \caption{AUC vs. 1 - Discrimination tradeoff for all six dataset and eight model combinations.}
    \label{fig:model-auc-dis}
\end{figure}

\begin{figure}[!h]
    \centering
    \includegraphics[width=\linewidth]{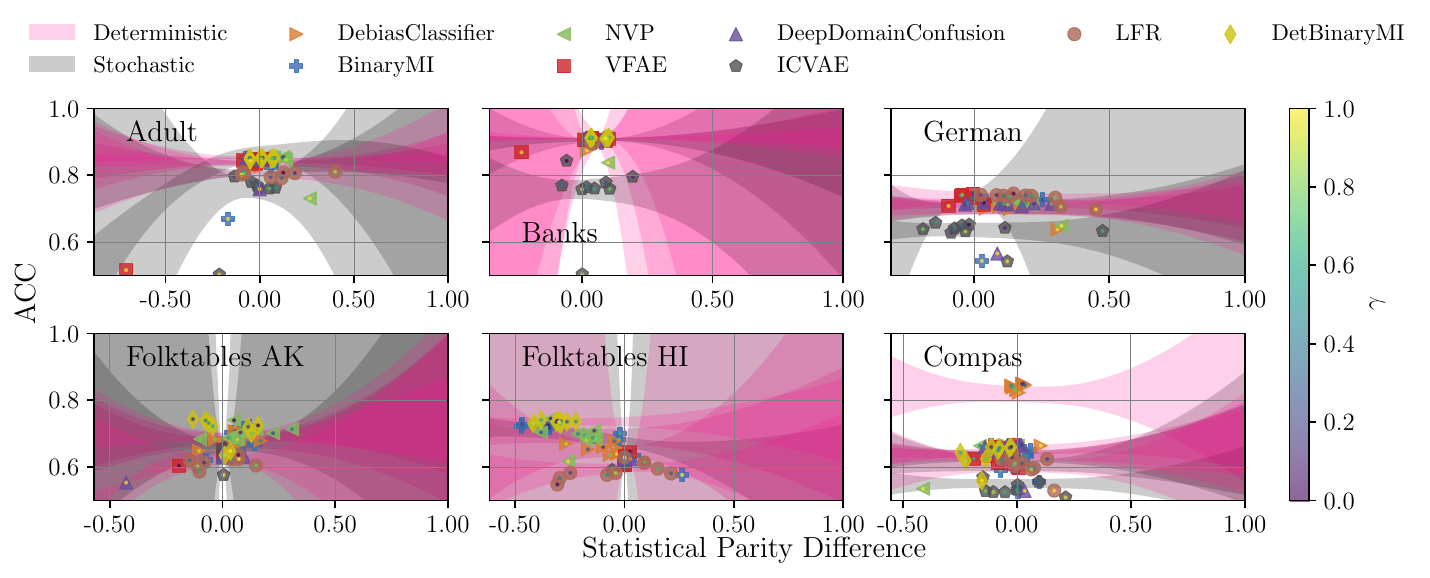}
    \caption{Accuracy vs. 1 - statistical parity difference tradeoff for all six dataset and eight model combinations.}
    \label{fig:model-acc-parity}
\end{figure}

\begin{figure}[!h]
    \centering
    \includegraphics[width=\linewidth]{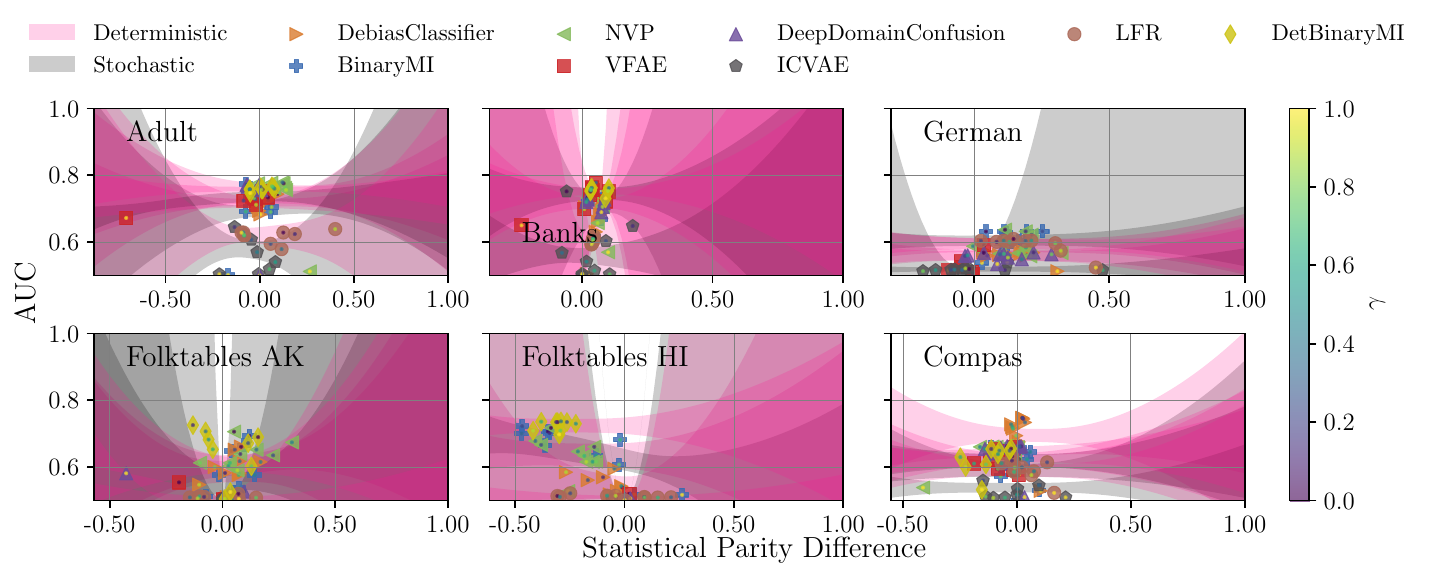}
    \caption{AUC vs. 1 - statistical parity difference tradeoff for all six dataset and eight model combinations.}
    \label{fig:model-auc-parity}
\end{figure}

\begin{figure}[!h]
    \centering
    \includegraphics[width=\linewidth]{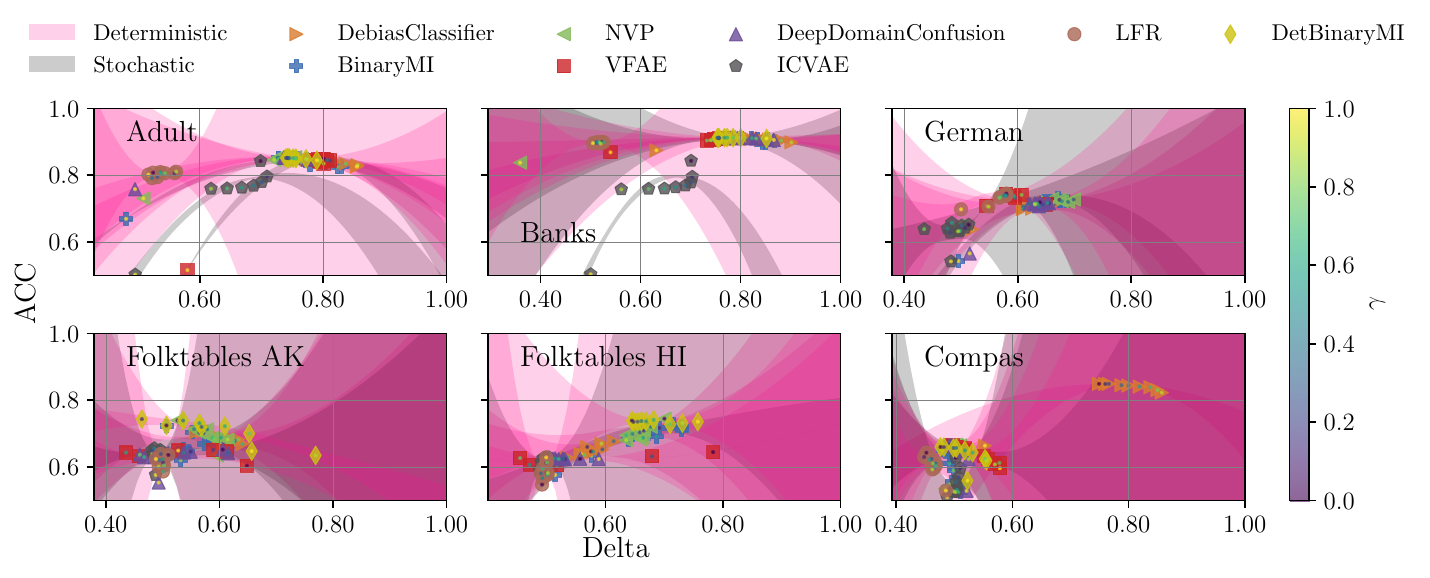}
    \caption{Accuracy vs. delta tradeoff for all six dataset and eight model combinations.}
    \label{fig:model-acc-delta}
\end{figure}

\begin{figure}[!h]
    \centering
    \includegraphics[width=\linewidth]{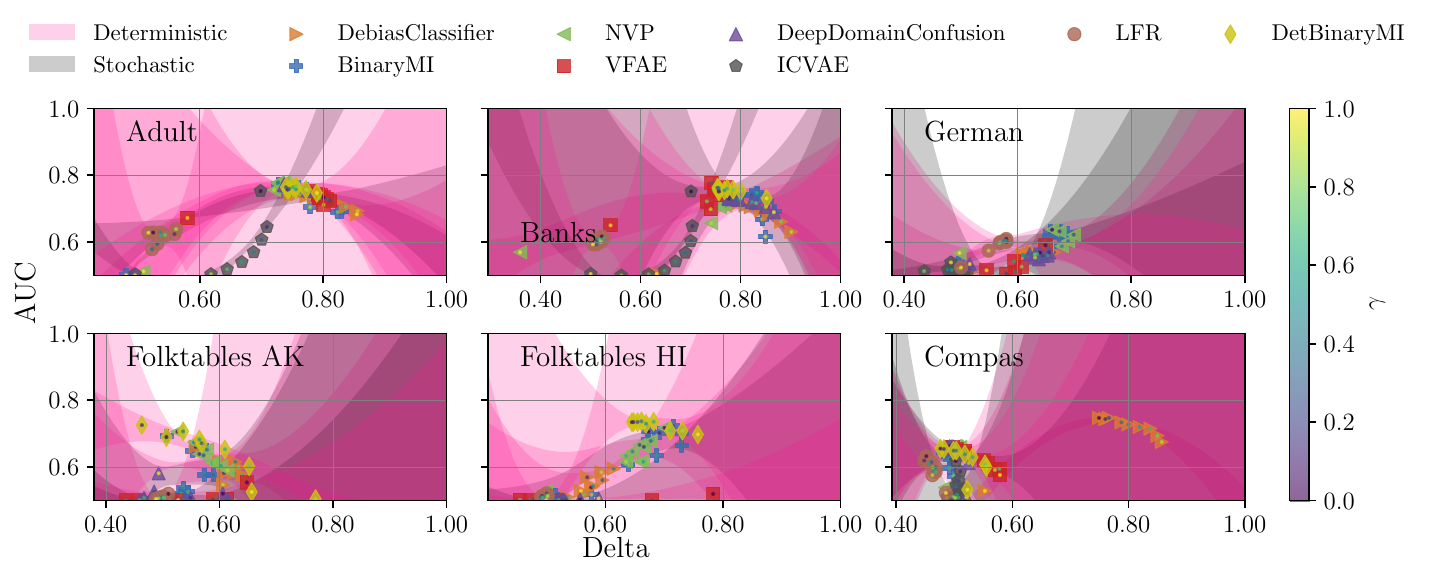}
    \caption{AUC vs. 1 - rND tradeoff for all six dataset and eight model combinations.}
    \label{fig:model-auc-delta}
\end{figure}

\begin{figure}[!h]
    \centering
    \includegraphics[width=\linewidth]{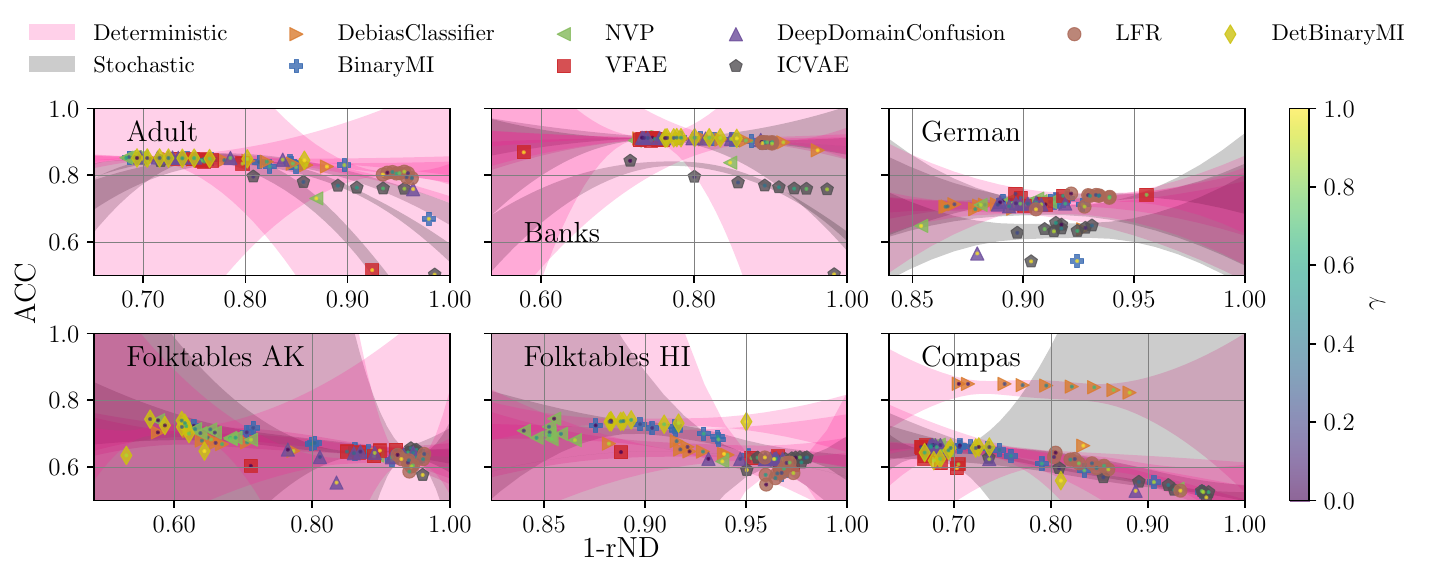}
    \caption{Accuracy vs. delta tradeoff for all six dataset and eight model combinations.}
    \label{fig:model-acc-rnd}
\end{figure}

\begin{figure}[!h]
    \centering
    \includegraphics[width=\linewidth]{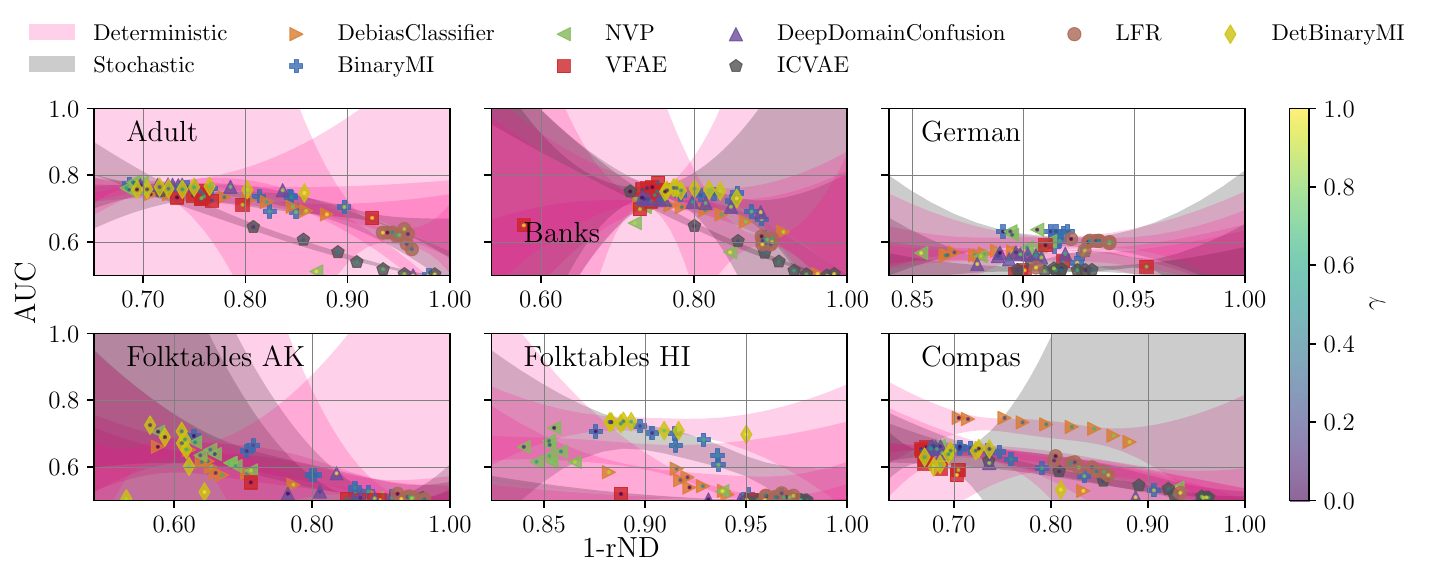}
    \caption{AUC vs. 1 - rND tradeoff for all six dataset and eight model combinations.}
    \label{fig:model-auc-rnd}
\end{figure}

\FloatBarrier
\section{ReLU Activation Tests}\label{sec:app-relu}

\subsection{ReLU Model Tests}
Figures \ref{fig:model-acc-dis-relu}, \ref{fig:model-auc-dis-relu}, \ref{fig:model-acc-parity-relu}, \ref{fig:model-auc-parity-relu},\ref{fig:model-acc-delta-relu}, \ref{fig:model-auc-delta-relu}, \ref{fig:model-acc-rnd-relu} and \ref{fig:model-auc-rnd-relu} show how DebiasClassifier and its ReLU variant perform across different gamma values in terms of accuracy and fairness.
Generally, the accuracy remains stable with slight variations across datasets, indicating that ReLU activation does not drastically alter the fairness-accuracy trade-off.
For the Compas the normal DebiasClassifier outperforms the ReLU variant significantly.

\begin{figure}[!h]
    \centering
    \includegraphics[width=\linewidth]{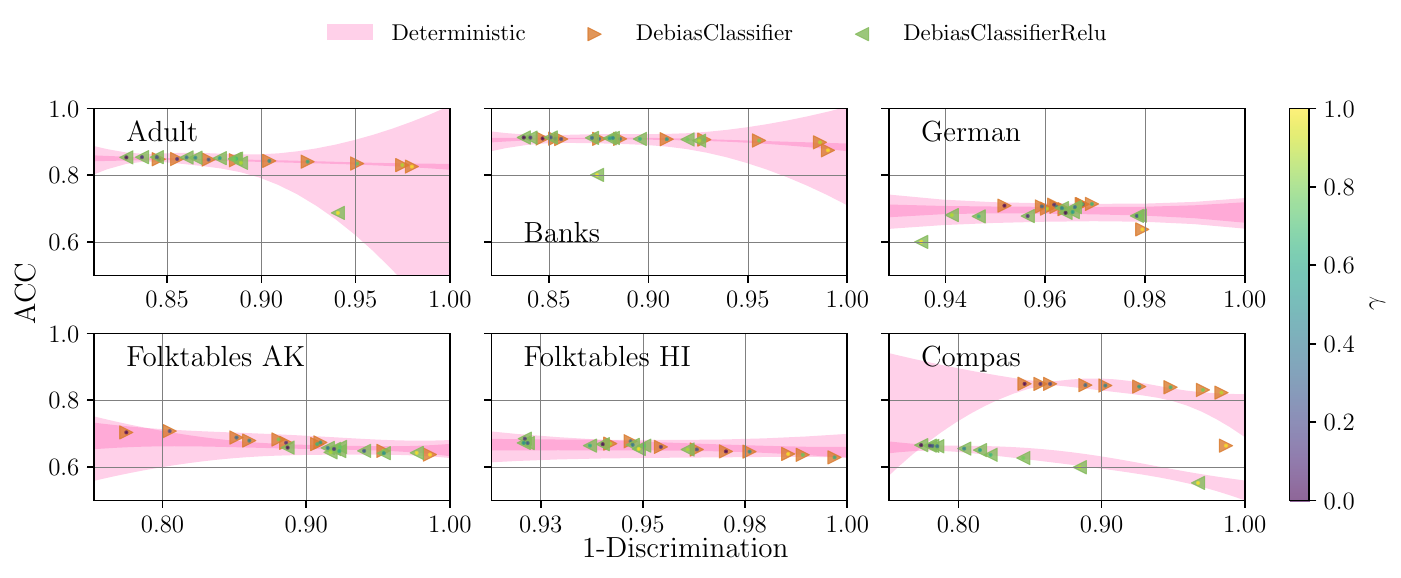}
    \caption{Accuracy vs. 1 - Discrimination tradeoff for the DebiasClassifier with $\tanh$ and ReLU activation for the six datasets.}
    \label{fig:model-acc-dis-relu}
\end{figure}

\begin{figure}[!h]
    \centering
    \includegraphics[width=\linewidth]{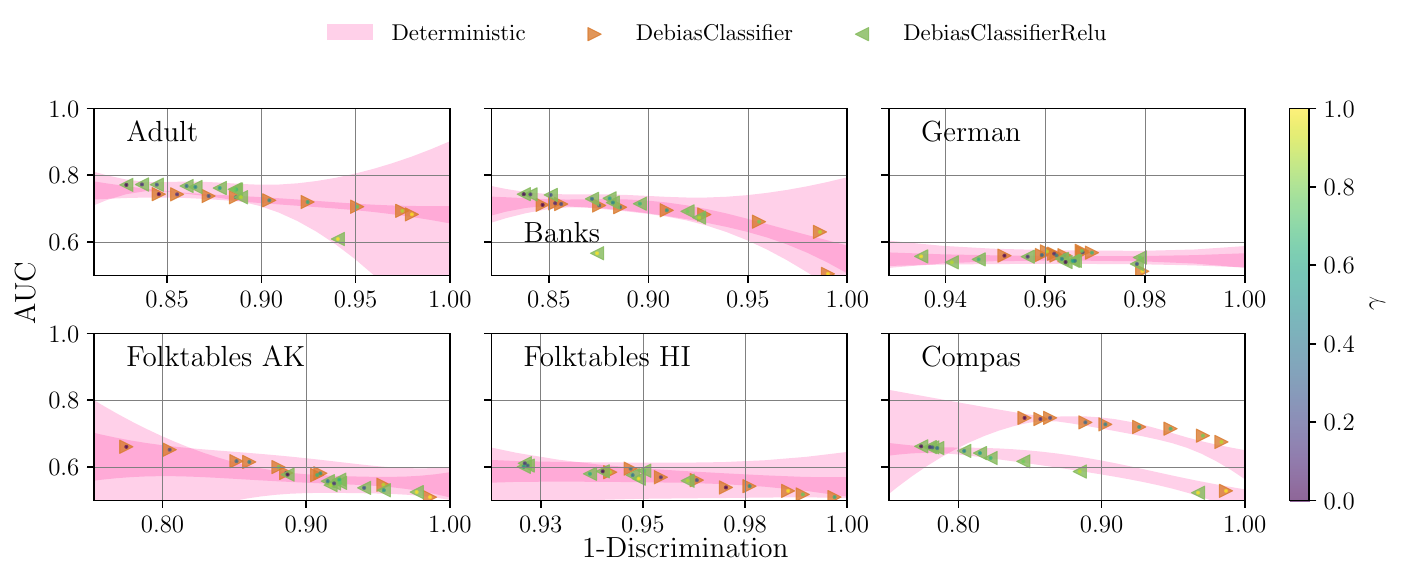}
    \caption{AUC vs. 1 - Discrimination tradeoff for the DebiasClassifier with $\tanh$ and ReLU activation for the six datasets.}
    \label{fig:model-auc-dis-relu}
\end{figure}

\begin{figure}[!h]
    \centering
    \includegraphics[width=\linewidth]{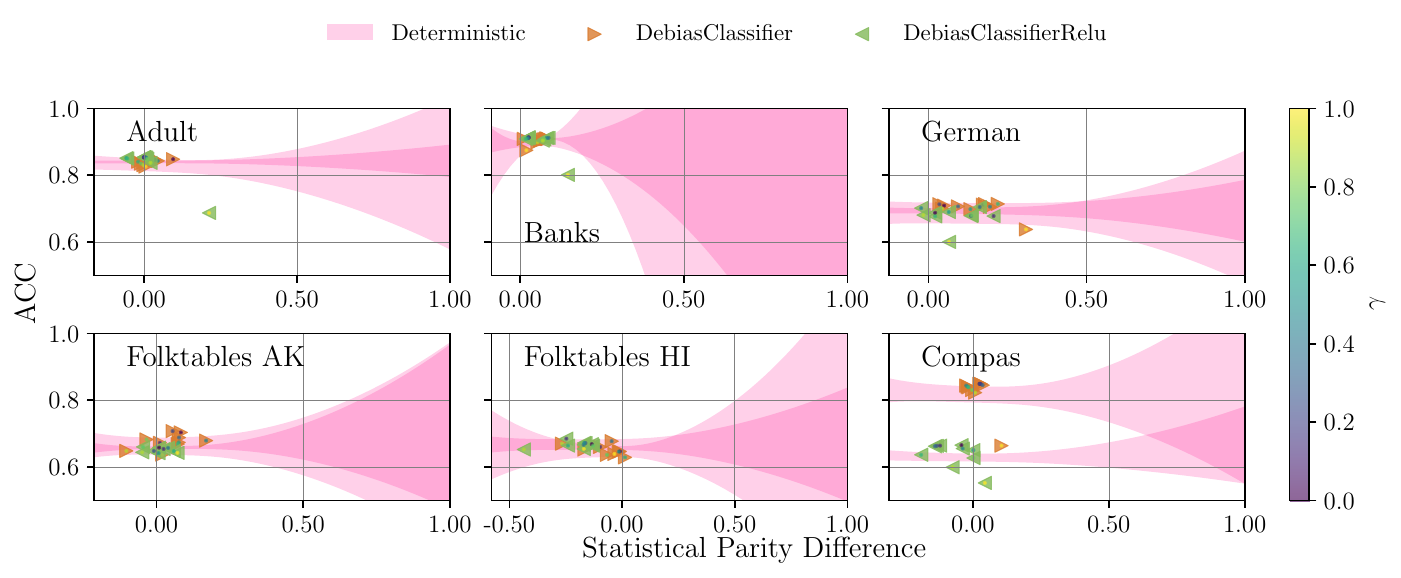}
    \caption{Accuracy vs. 1 - statistical parity difference tradeoff for the DebiasClassifier with $\tanh$ and ReLU activation for the six datasets.}
    \label{fig:model-acc-parity-relu}
\end{figure}

\begin{figure}[!h]
    \centering
    \includegraphics[width=\linewidth]{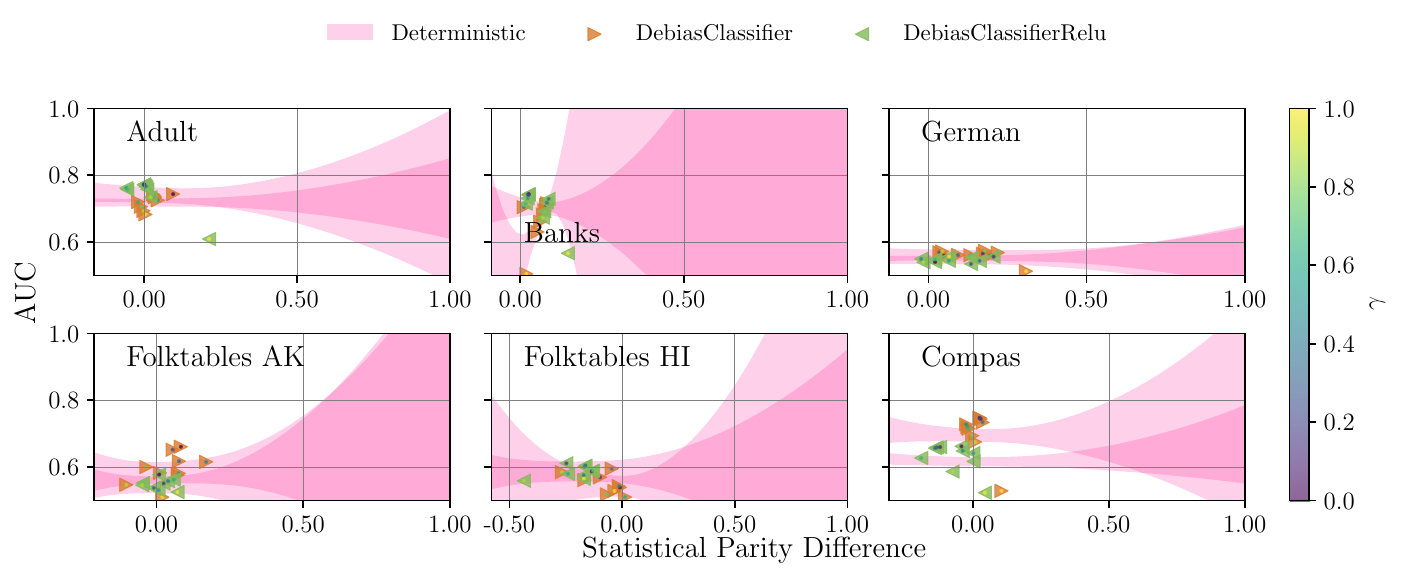}
    \caption{AUC vs. 1 - statistical parity difference tradeoff for the DebiasClassifier with $\tanh$ and ReLU activation for the six datasets.}
    \label{fig:model-auc-parity-relu}
\end{figure}

\begin{figure}[!h]
    \centering
    \includegraphics[width=\linewidth]{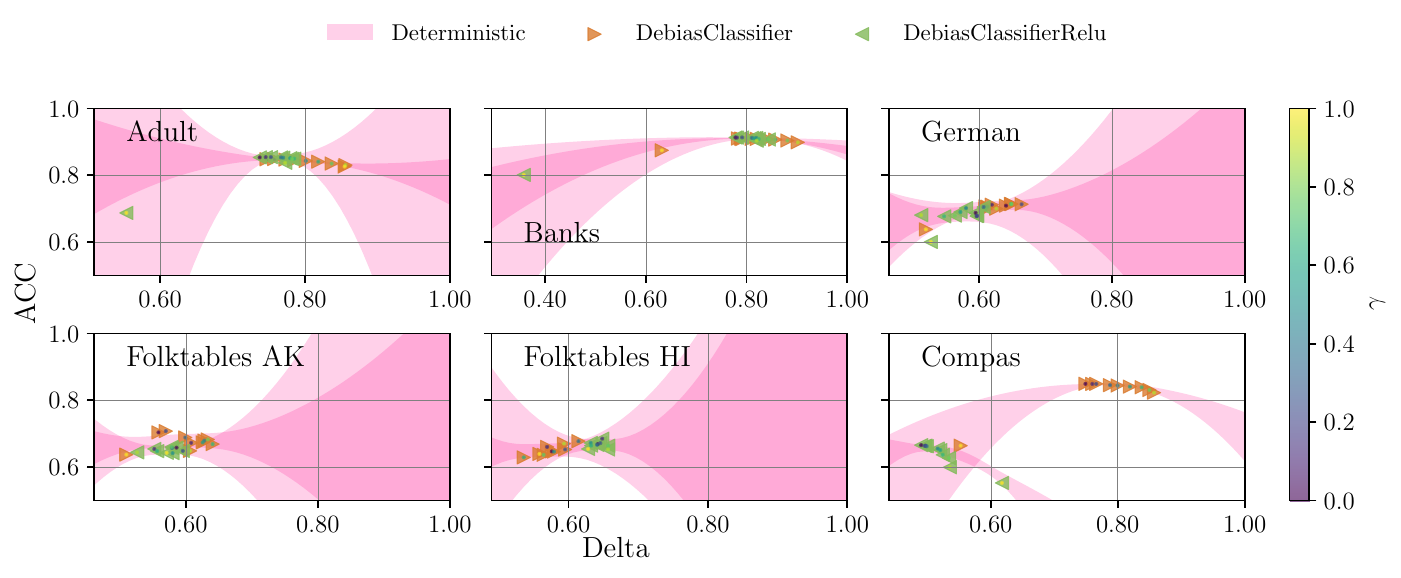}
    \caption{Accuracy vs. delta tradeoff for the DebiasClassifier with $\tanh$ and ReLU activation for the six datasets.}
    \label{fig:model-acc-delta-relu}
\end{figure}

\begin{figure}[!h]
    \centering
    \includegraphics[width=\linewidth]{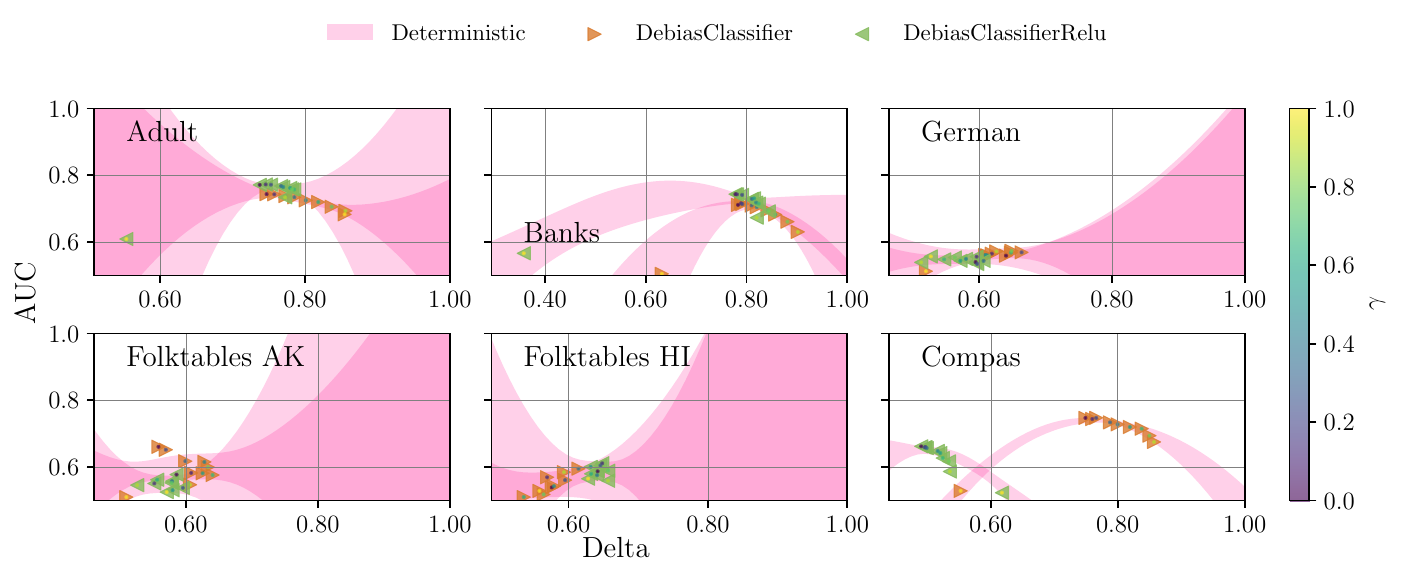}
    \caption{AUC vs. 1 - rND tradeoff for the DebiasClassifier with $\tanh$ and ReLU activation for the six datasets.}
    \label{fig:model-auc-delta-relu}
\end{figure}

\begin{figure}[!h]
    \centering
    \includegraphics[width=\linewidth]{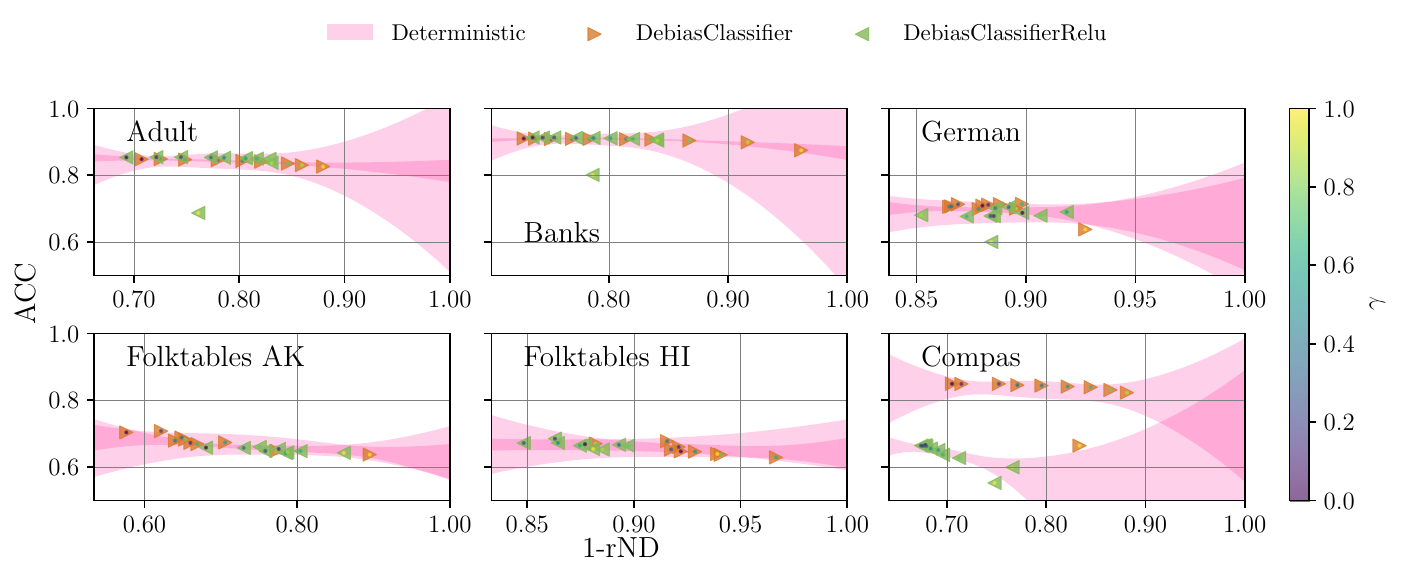}
    \caption{Accuracy vs. delta tradeoff for the DebiasClassifier with $\tanh$ and ReLU activation for the six datasets.}
    \label{fig:model-acc-rnd-relu}
\end{figure}

\begin{figure}[!h]
    \centering
    \includegraphics[width=\linewidth]{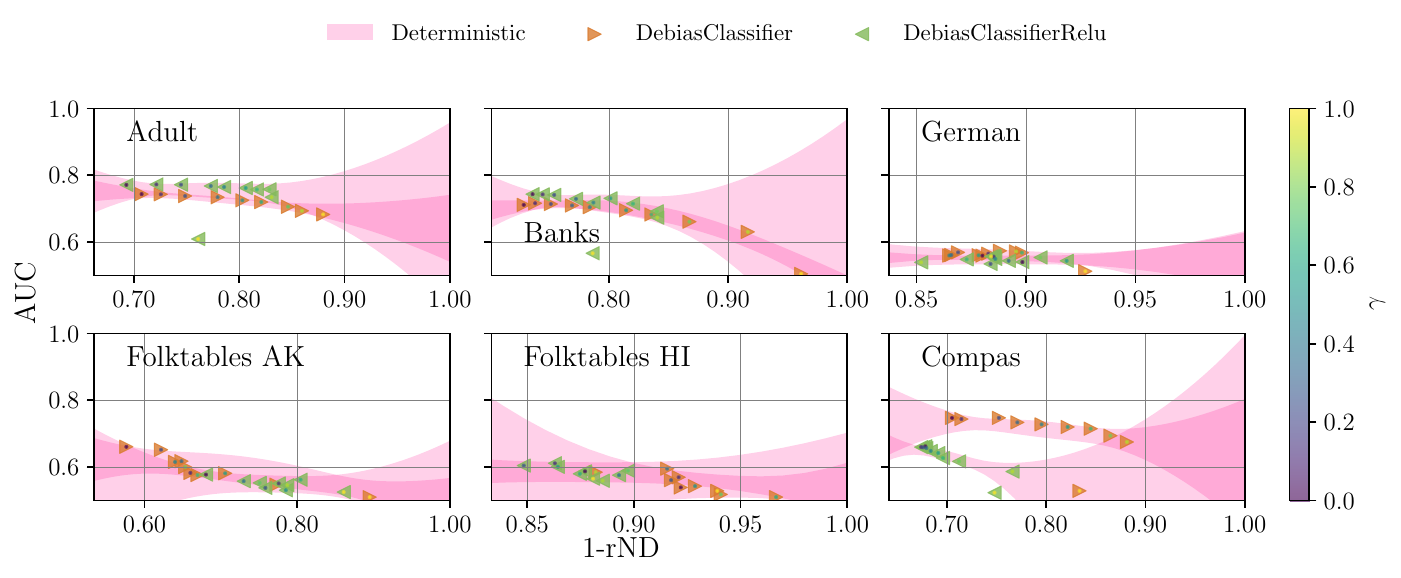}
    \caption{AUC vs. 1 - rND tradeoff for the DebiasClassifier with $\tanh$ and ReLU activation for the six datasets.}
    \label{fig:model-auc-rnd-relu}
\end{figure}

\FloatBarrier
\subsection{ReLU AutoML Tests}

Figures \ref{fig:automl-acc-relu} and \ref{fig:automl-auc-relu} illustrates the performance of AutoML to predict the sensitive attribute from the representations from the DebiasClassifier and DebiasClassifierRelu across a range of $\gamma$ values.
The results indicate that the AutoML accuracy and AUC for both representations maintain relatively stable results over different gamma values, and seems to be relatively constant across all considered $\gamma$ values.

\begin{figure}[h]
    \centering
    \includegraphics[width=\linewidth]{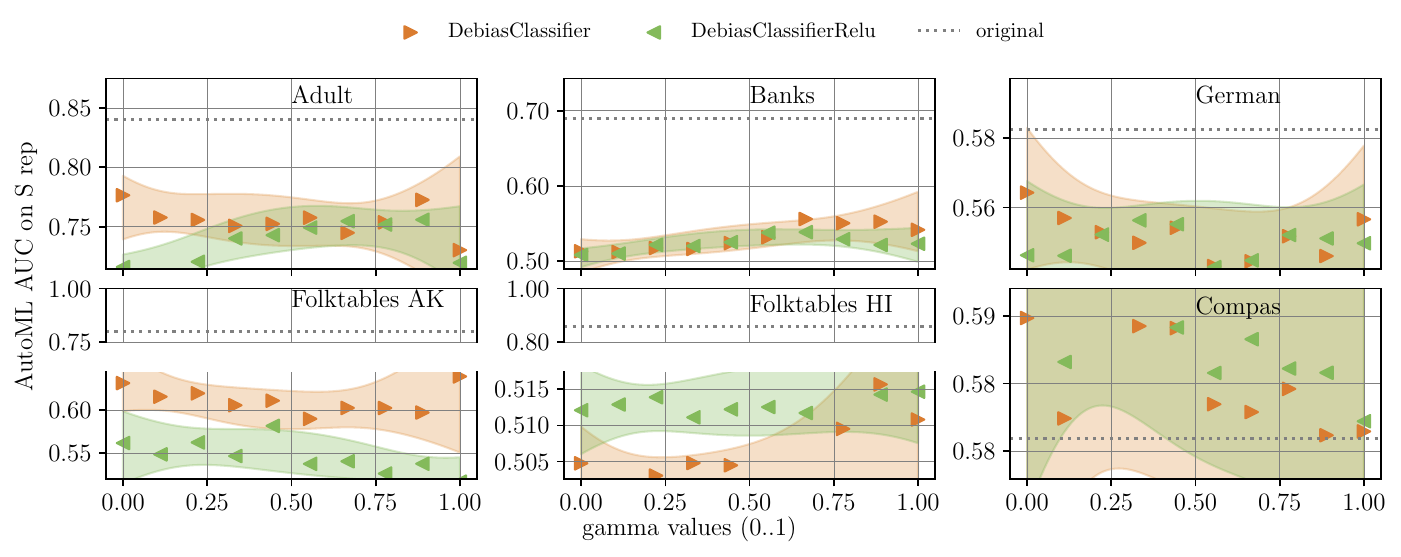}
    \caption{AutoML AUC of predicting $S$ versus gamma values on on the representations from DebiasClassifier and DebiasClassifierRelu across multiple datasets. The dotted lines represent the performance on the original data.}
    \label{fig:automl-auc-relu}
\end{figure}

\begin{figure}[h]
    \centering
    \includegraphics[width=\linewidth]{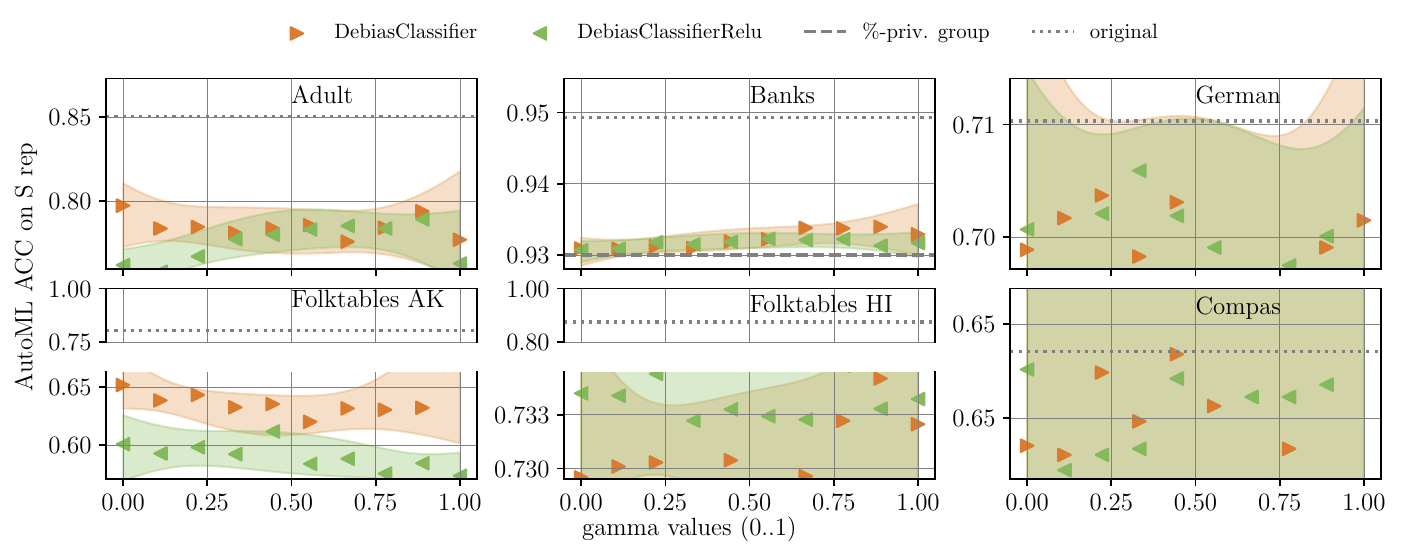}
    \caption{AutoML AUC of predicting $S$ versus gamma values on on the representations from DebiasClassifier and DebiasClassifierRelu across multiple datasets. The dashed and dotted lines represent the percentage of the privileged group and the performance on the original data, respectively.}
    \label{fig:automl-acc-relu}
\end{figure}

\FloatBarrier
\section{Particle Physics Data}\label{sec:physics-data}

To rigorously assess whether a machine learning model has effectively removed unwanted information, it is crucial to comprehend the underlying data generation process.
However, in many real-world scenarios where fairness is a concern, data is often sourced from human interactions, making it challenging to fully understand the origins of bias.

\begin{figure}[h]
    \centering
    \includegraphics[width=\linewidth]{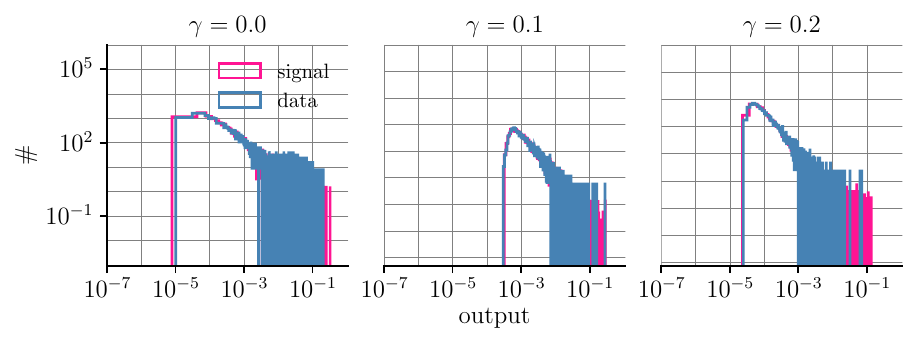}
    \caption{Comparison of signal and background data at various gamma ($\gamma$) levels using the BinaryMI model on the Kaggle 'Flavours of Physics' dataset. The plot showcases how different gamma values ($\gamma=0.0, 0.1, 0.2$) impact the distribution of the output variable, demonstrating the model's ability to be invariant between signal and background events.}
    \label{fig:binaryMI-output}
\end{figure}

\begin{figure}[h]
    \centering
    \includegraphics[width=\linewidth]{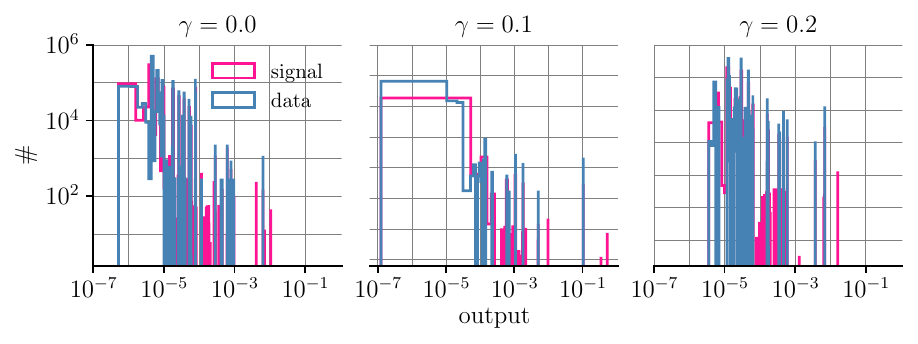}
    \caption{Comparison of signal and background data at various gamma ($\gamma$) levels using the DebiasClassifier model on the Kaggle 'Flavours of Physics' dataset. The plot showcases how different gamma values ($\gamma=0.0, 0.1, 0.2$) impact the distribution of the output variable.}
    \label{fig:debias-output}
\end{figure}
One approach to address this challenge while maintaining the complexity of real-world datasets is to explore domains where data creation and collection processes are meticulously documented and understood with a high degree of precision.
One such domain is particle physics.
Previous work tested adversarial FRL \cite{ganin2016jmlr} to predict W-jets events, being at the same time invariant to pileup (i.e. noise) \cite{10.5555/3294771.3294865}.
We note however that the original data was not published, to the best of our knowledge.
We therefore suggest to use the data from the ``Flavour of Physics'' Kaggle challenge \cite{kaggle-flavour}.
The task is to identify $\tau \rightarrow \mu \mu \mu$ decay events in high-energy physics data \cite{Aaij_2015} and improve the detection of this rare particle decay process using machine learning techniques.
Participants are provided with datasets containing particle collision data and are tasked with building models to distinguish between signal and background events.
The FRL/invariance framing is provided by an agreement test which quantifies the level of invariance obtained by the model across predictions for simulation and real (i.e. detector) data.
This test is necessary as for an unknown signal event there only simulation data will be available; while for the background event both simulation and real detector data are given.

To mitigate this issue most particle physics experiments have so-called ``control" areas where the real- and simulation-data distributions are well-understood theoretically.
These control areas are therefore employed in an agreement test.

When testing different FRL methodologies on this complex physical data, we observe phenomena which are consistent with our findings in Sections 3 and 4. 
In Figure \ref{fig:binaryMI-output} the output of the BinaryMI model -- a stochastic model -- over the control area is shown for the decay $Ds \rightarrow \varphi \Pi$ which has the same signature as the $\tau \rightarrow \mu \mu \mu$ decay \footnote{The code for this initial evaluation can be found at \url{https://anonymous.4open.science/r/EvalFRL/notebooks/distribution_shift.ipynb}.}.
When increasing the $\gamma$ the two output distributions become increasingly similar, indicating that the model is invariant to data and signal.

In contrast to the BinaryMI model, Figure \ref{fig:debias-output} shows the output of the DebiasClassifier.
Besides the discrete output values the DebiasClassifier shows greater separation between signal and background data at higher $\gamma$ values, which implies less fairness since more separation indicates a stronger bias towards certain data.

\FloatBarrier
\section{Computing Infrastructure}\label{sec:compute}
All experiments were run on CPUs without involving GPUs.
The system used consisted of four PCs, each equipped with 190 GB of RAM and an AMD EPYC 9254 24-Core Processor.
The total runtime for all experiments was approximately one week.
The primary limitation was not computational power but the required RAM to fit all models simultaneously.
For reproducing the experiments, we recommend running one model over all six datasets per PC to manage memory constraints effectively.

\end{document}